\newtheorem{lemma}{Lemma}
\definecolor{uclablue}{rgb}{0.15, 0.45, 0.68}
\newcolumntype{C}[1]{>{\centering\arraybackslash}m{#1}}
\newcolumntype{L}[1]{>{\raggedright\arraybackslash}m{#1}}
\def\eqref#1{equation~\ref{#1}}
\def\1{\bm{1}}
\DeclareMathAlphabet{\mathsfit}{\encodingdefault}{\sfdefault}{m}{sl}
\SetMathAlphabet{\mathsfit}{bold}{\encodingdefault}{\sfdefault}{bx}{n}
\title{The Unseen Bias: How Norm Discrepancy in Pre-Norm MLLMs Leads to 
Visual Information Loss}
\author{
  \textbf{Bozhou Li}\textsuperscript{1,4}\thanks{Email: libozhou@pku.edu.cn},
  \textbf{Xinda Xue}\textsuperscript{1},
  \textbf{Sihan Yang}\textsuperscript{1,3},
  \textbf{Yang Shi}\textsuperscript{1}, \\
  \textbf{Xinlong Chen}\textsuperscript{2},
  \textbf{Yushuo Guan}\textsuperscript{4},
  \textbf{Yuanxing Zhang}\textsuperscript{4},
  \textbf{Wentao Zhang}\textsuperscript{1} \\
  \textsuperscript{1}Peking University \\
  \textsuperscript{2}School of Artificial Intelligence, University of Chinese Academy of Sciences \\
  \textsuperscript{3}Xi’an Jiaotong University \\
  \textsuperscript{4}Kling Team, Kuaishou Technology 
}
\begin{document}

\maketitle

\begin{abstract}
Multimodal Large Language Models (MLLMs), which couple pre-trained vision encoders and language models, have shown remarkable capabilities. However, their reliance on the ubiquitous Pre-Norm architecture introduces a subtle yet critical flaw: a severe norm disparity between the high-norm visual tokens and the low-norm text tokens. In this work, we present a formal theoretical analysis demonstrating that this imbalance is not a static issue. Instead, it induces an ``asymmetric update dynamic,'' where high-norm visual tokens exhibit a ``representational inertia,'' causing them to transform semantically much slower than their textual counterparts. This fundamentally impairs effective cross-modal feature fusion. Our empirical validation across a range of mainstream MLLMs confirms that this theoretical dynamic---the persistence of norm disparity and the resulting asymmetric update rates---is a prevalent phenomenon. Based on this insight, we propose a remarkably simple yet effective solution: inserting a single, carefully initialized LayerNorm layer after the visual projector to enforce norm alignment. Experiments conducted on the LLaVA-1.5 architecture show that this intervention yields significant performance gains not only on a wide suite of multimodal benchmarks but also, notably, on text-only evaluations such as MMLU, suggesting that resolving the architectural imbalance leads to a more holistically capable model.

\end{abstract}

\section{Introduction}

In recent years, Multimodal Large Language Models (MLLMs) have achieved significant progress, demonstrating robust performance across a wide range of cross-modal tasks ~\citep{comanici2025gemini,hurst2024gpt,wu2024deepseek,bai2025qwen2}. A prevailing architectural paradigm involves augmenting a pre-trained Large Language Model (LLM) with visual capabilities by coupling it with a pre-trained Vision Encoder (VE). The VE, typically a Vision Transformer (ViT) ~\citep{dosovitskiy2020image}, first partitions an image into a sequence of patches and encodes them into a series of feature vectors, or ``visual tokens." To bridge the modality gap, a lightweight adapter module is then introduced. This module's core function is to act as a translator, projecting these visual tokens into the LLM's word embedding space, thereby making visual information comprehensible to a model originally designed for text ~\citep{zhang2024mm}.

Despite their powerful general-purpose capabilities, emerging research has revealed inherent limitations in MLLMs. For instance, many models struggle with the perception of fine-grained visual details ~\citep{rahmanzadehgervi2024vision}. Furthermore, within their self-attention mechanisms—the core component for weighing the importance of different inputs—visual tokens often receive less focus than their textual counterparts ~\citep{chen2024image}. To address these challenges, we identify a more fundamental problem rooted in the now-ubiquitous Pre-Norm ~\cite{xiong2020layer} architectural design. In this paradigm, normalization is applied before the main computational block ($F$), with the residual update defined as:
\begin{align}
    \mathbf{h}^{(l+1)} &= \mathbf{h}^{(l)} +  F(\text{Norm}(\mathbf{h}^{(l)}))
\end{align}

This architecture is widely adopted because it is easier to train. By leaving the residual path $\mathbf{h}^{(l)}$ unaltered, it creates an identity-like connection that ensures smooth gradient flow, preventing vanishing gradients in deep networks. However, this design has a critical side effect: since the output of the residual sum is never re-normalized, the variance---and consequently, the L$_2$ norm---of the hidden states tends to accumulate and grow with network depth ~\citep{kim2025peri}. As is shown in Figure \ref{fig:right_pdf}, it creates a particularly acute imbalance in MLLMs where high-norm visual tokens and lower-norm text tokens are processed together within a shared Pre-Norm LLM backbone---as the visual tokens themselves are generated by a deep, Pre-Norm ViT.

Our formal theoretical analysis reveals a critical dynamic: a fundamental asymmetry in the evolutionary pace of visual and textual representations through the LLM's layers. We demonstrate that for high-norm visual tokens, the Pre-Norm update mechanism induces a high ``representational inertia'', causing them to undergo a much slower semantic transformation. In contrast, lower-norm textual tokens adapt their representations more readily, leading to a mismatched rate of convergence towards a unified multimodal space. Notably, this dynamic divergence arises not from an intrinsic property of visual versus textual information, but from an architectural artifact: the interplay between the Pre-Norm design and the prevailing MLLM paradigm.

Bridging theory and practice, we first confirmed that these norm disparities and asymmetric update rates are prevalent across mainstream open-source VL models. Based on this validation, we propose a targeted intervention: inserting a normalization layer to enforce strict norm alignment. However, practical implementation reveals a critical optimization bottleneck. Since text embeddings in modern LLMs (e.g., Qwen2.5) exhibit extremely low magnitudes, aligning visual tokens to this target requires initializing the normalization gain to a minute value. This naively triggers a \textit{vanishing gradient problem}, detaching the vision encoder from supervision. To resolve this, we introduce a Global Weight Compensation (GWC) mechanism. This technique decouples the forward norm compression from the backward gradient magnitude, ensuring effective learning even under extreme alignment constraints.

In this work, our key contributions are threefold:
\begin{itemize}[leftmargin=*]
    \item \textbf{Theoretical Identification of Asymmetric Dynamics.} We are the first to identify and theoretically formalize the issue of cross-modal norm disparity in Pre-Norm MLLMs. Our analysis reveals an ``asymmetric update dynamic'' where high-norm visual tokens exhibit ``representational inertia,'' leading to a slower semantic evolution compared to text tokens.

    \item \textbf{Extensive Empirical Validation.} We provide extensive empirical validation across a suite of mainstream open-source MLLMs, demonstrating that the predicted norm disparities and asymmetric update rates exist, confirming our theoretical model in practice.

    \item \textbf{A Simple and Robust Solution.} We propose Gradient-Aware Norm Alignment, incorporating a novel WC mechanism. This approach resolves the optimization dilemma caused by extreme norm compression. Our experiments show that this method yields significant performance gains not only on multimodal tasks but also, notably, on text-only benchmarks, indicating a more holistic improvement to the model's capabilities.
\end{itemize}

\begin{figure}[!t]
    \centering
    \begin{subfigure}[b]{0.3\textwidth}
        \centering
        \includegraphics[width=\textwidth]{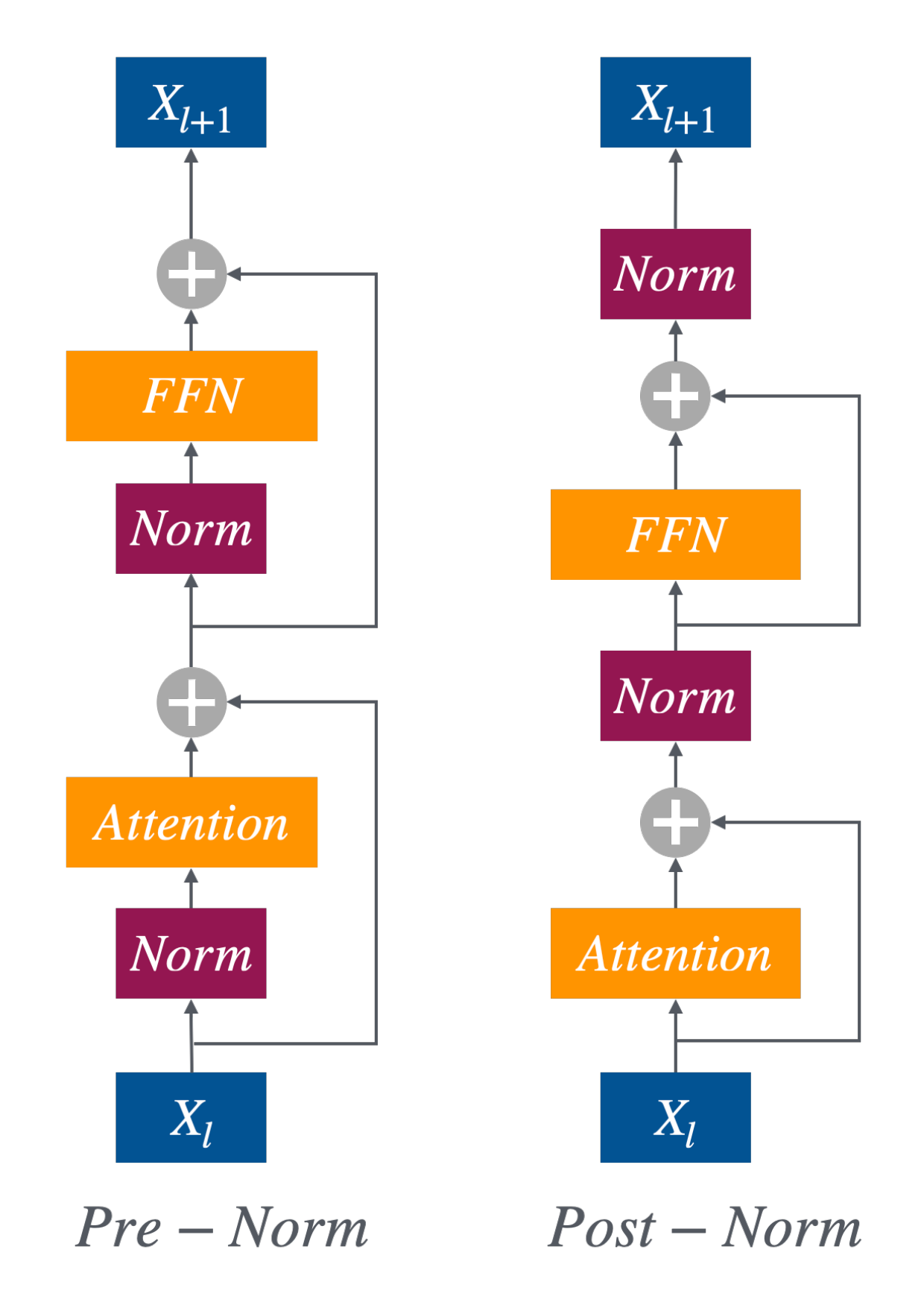} 
        \caption{Pre-Norm vs Post-Norm}
        \label{fig:left_pdf}
    \end{subfigure}
    \hfill
    \begin{subfigure}[b]{0.65\textwidth}
        \centering
        \includegraphics[width=\textwidth]{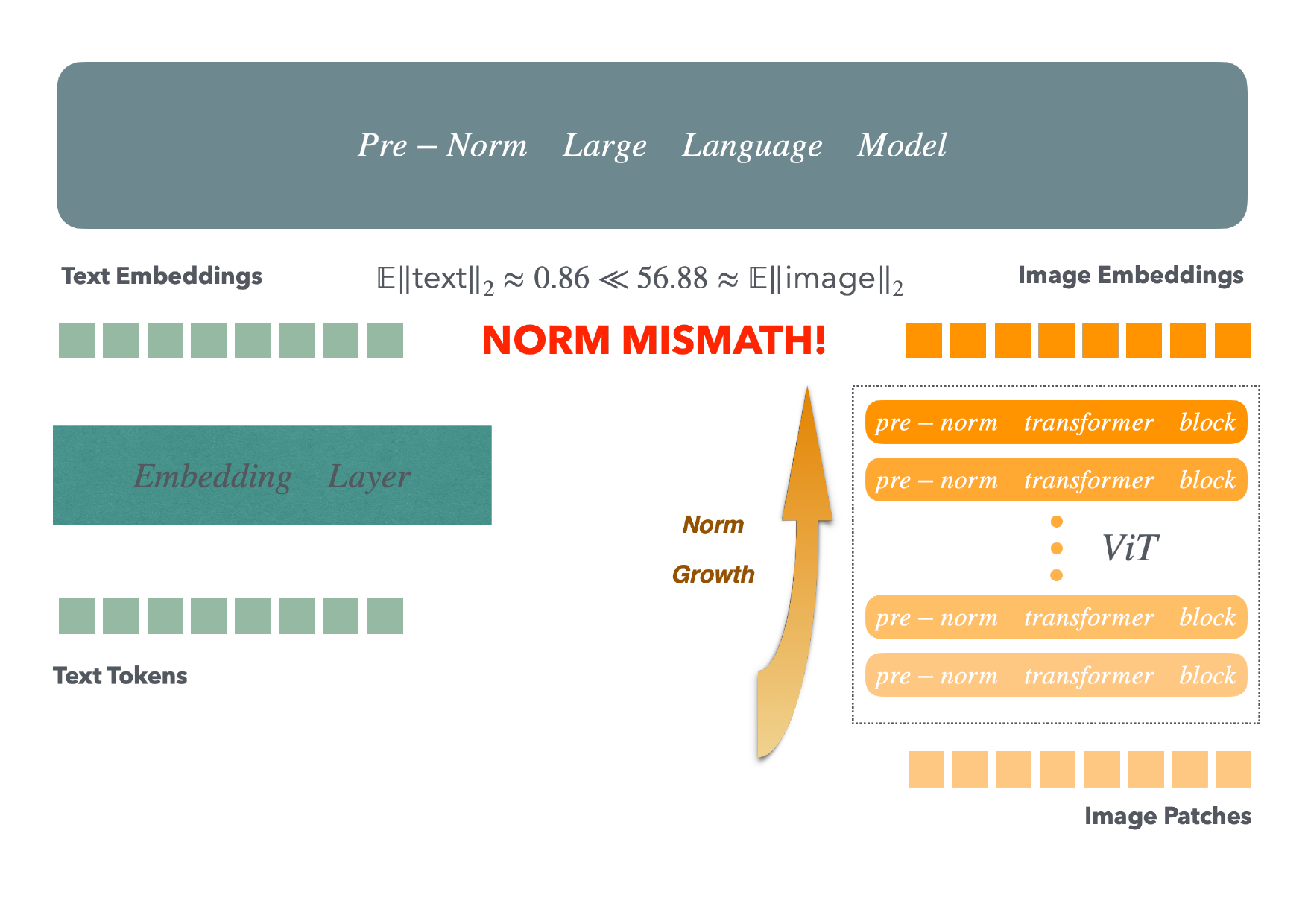} 
        \caption{The Norm Mismatch Problem Induced by MLLM Architectures}
        \label{fig:right_pdf}
    \end{subfigure}
    \label{fig:main}
\end{figure}

\section{Preliminaries}
\label{sec:preliminaries}

Our analysis is grounded in the core components of modern Transformer architectures. We briefly review the self-attention mechanism, the role and types of normalization layers, and the critical design choice between Pre-Norm and Post-Norm architectures.

\subsection{Self-Attention}
The self-attention mechanism is the computational core of the Transformer. For an input sequence of hidden states $\bm{H} \in \mathbb{R}^{N \times D}$, it first linearly projects the sequence into queries ($\bm{Q}$), keys ($\bm{K}$), and values ($\bm{V}$) using learned weight matrices $\mathbf{W}_Q, \mathbf{W}_K, \mathbf{W}_V \in \mathbb{R}^{D \times d_k}$. The unnormalized dot-product scores are computed as $\bm{Q}\bm{K}^T$.

\subsection{Normalization Layers in Transformers}
Normalization layers are a critical component for stabilizing the training of deep networks by controlling the distribution of activations. In Transformers, they ensure that the inputs to each sublayer (self-attention and FFN) remain well-behaved, preventing the magnitude of activations from exploding or vanishing. This is particularly important in MLLMs where features from different modalities, with potentially different statistical properties, are processed together. Two common normalization schemes are:

\paragraph{Layer Normalization (LayerNorm).} This technique normalizes activations across the feature dimension for each token independently ~\citep{ba2016layer}. For an input vector $\bm{x}$, it is defined as:
\begin{equation}
    \text{LayerNorm}(\bm{x}) = \frac{\bm{x} - \mathbb{E}[\bm{x}]}{\sqrt{\text{Var}[\bm{x}] + \epsilon}} \odot \bm{g} + \bm{\beta}
\end{equation}
where $\bm{g}$ (gain) and $\bm{\beta}$ (bias) are learnable parameters that restore expressive power.

\paragraph{Root Mean Square Norm (RMSNorm).} A simplified and computationally efficient variant of LayerNorm that forgoes re-centering (subtracting the mean) ~\citep{zhang2019root}. It normalizes by the root mean square of the vector, proving effective in many modern LLMs:
\begin{equation}
    \text{RMSNorm}(\bm{x}) = \frac{\bm{x}}{\sqrt{\frac{1}{D}\|\bm{x}\|_2^2 + \epsilon}} \odot \bm{g}
\end{equation}

\subsection{The Pre-Norm vs. Post-Norm Residual Architecture}
A Transformer block is functionally an additive update mechanism that refines a token's representation ~\citep{vaswani2017attention,he2016deep}. For our analysis, it is useful to interpret the components of this update geometrically. The core operation is:
\begin{equation}
    \bm{h}^{(l+1)} = \bm{h}^{(l)} + \Delta \bm{h}^{(l)}
\end{equation}
In this view, we can consider $\bm{h}^{(l)}$ as the \textbf{Previous State}, representing the token's current position in a semantic space, which arrives via the skip connection. The term $\Delta \bm{h}^{(l)}$, computed by the residual branch (e.g., the self-attention sublayer), can be seen as the \textbf{Update Vector} that adjusts this position. The sum, $\bm{h}^{(l+1)}$, is therefore the resulting \textbf{New State}.

The critical design choice is where to place the normalization operation relative to this residual sum. The structural difference between the Pre-Norm and Post-Norm architectures is illustrated in Figure \ref{fig:left_pdf}. This defines two architectural families with distinct trade-offs:
\paragraph{Post-Norm Architecture.} This was the original Transformer design, which applies normalization after the residual connection:
\begin{equation}
    \bm{h}^{(l+1)} = \text{Norm}(\bm{h}^{(l)} + \text{Sublayer}(\bm{h}^{(l)}))
\end{equation}
While its direct normalization of the output path can preserve strong representational fidelity, the gradients must pass through a normalization layer at every block. This can impede gradient flow in very deep networks, making them harder to train.  However, Post-Norm does not lead to network depth degradation and exhibits stronger representational capabilities.

\paragraph{Pre-Norm Architecture.} This design, now widely adopted, applies normalization before the sublayer, within the residual branch:
\begin{align}
    \Delta \bm{h}^{(l)} &= \text{Sublayer}(\text{Norm}(\bm{h}^{(l)})) \\
    \bm{h}^{(l+1)} &= \bm{h}^{(l)} + \Delta \bm{h}^{(l)}
\end{align}
Its primary advantage is improved training dynamics. The skip connection path is an uninterrupted, identity-like connection, which ensures smooth gradient flow and makes training deep models significantly easier. However, this design has a critical side effect: since the final output $\bm{h}^{(l+1)}$ is never re-normalized, the variance—and thus the L2 norm—of the hidden states tends to accumulate across layers. This creates the vulnerability we analyze, especially in multimodal contexts where initial norms are already disparate.

Building upon the architectural components defined in the Preliminaries, we now formalize our central argument: the Pre-Norm architecture, when applied to MLLMs, inherently creates a dynamic imbalance that impairs cross-modal fusion. 
The issue originates from the standard MLLM paradigm, which injects features from a \textbf{pre-trained} vision encoder into a language model. It is an established property that deep Pre-Norm networks, like those used in modern vision encoders, accumulate variance as signals propagate through the layers, resulting in high-norm outputs~\citep{kim2025peri}. Consequently, when these pre-computed, high-norm visual tokens are introduced into the relatively lower-norm embedding space of the LLM, a significant initial norm disparity is established at the modality interface.

\section{Theoretical Analysis of Norm-Induced Decoupling Effect}
\label{sec:theory}

In this section, we present a theoretical proof that this initial norm imbalance is not a static issue but rather the catalyst for an accelerated geometric divergence between the two modalities, ultimately suppressing the cross-modal attention signal. The full mathematical derivation is provided in the Appendix.

\subsection{Analytical Framework and Assumptions}

Our proof is predicated on a set of simplifying assumptions that capture the core dynamics of the Pre-Norm architecture:
\begin{itemize}[leftmargin=*]
    \item \textbf{Modality Norm Imbalance}: We analyze two cases: the \textbf{imbalanced case} ($k = \frac{\|\bm{h}_{\text{vis}}\|_2}{\|\bm{h}_{\text{txt}}\|_2} > 1$) and the ideal \textbf{balanced case} ($k=1$).
    \item \textbf{Uniform Update Magnitude}: Due to the Pre-Norm design, the magnitude of the update vector, $\|\Delta \bm{h}^{(l)}\|_2$, is decoupled from the input norm $\|\bm{h}^{(l)}\|_2$. We denote this uniform magnitude as $C^{(l)}$ for a given layer.
    \item \textbf{Consistent Update Geometry}: We assume the update vector $\Delta\bm{h}$ forms a consistent expected angle, $\phi$, with the hidden state $\bm{h}$ for all tokens within a given layer.
    \item \textbf{Random Rotational Direction}: We assume the direction of the rotational component of the update is drawn from a symmetric distribution over the relevant subspace.
\end{itemize}

\subsection{Asymmetric Angular Velocity and Geometric Divergence}

To quantify the rate of directional change, we introduce the concept of \textbf{effective angular velocity}. The update vector $\Delta\bm{h}$ can be decomposed into a component parallel to the hidden state $\bm{h}$ (which only scales its length) and a component orthogonal to it (which causes rotation). The effective angular velocity, measured by the angle of pure rotation $\theta_{\text{eff}}$, is driven solely by this orthogonal component. As derived in Appendix~\ref{app:proofs}, its tangent is given by:
\begin{equation}
    \tan(\theta_{\text{eff}}) = \frac{C^{(l)} \sin(\phi)}{\|\bm{h}\|_2 + C^{(l)} \cos(\phi)}
\end{equation}

A direct and critical consequence of our framework is that this angular velocity becomes asymmetric in the imbalanced case. Because a uniform update magnitude $C^{(l)}$ is applied to hidden states of different norms, the high-norm vision tokens exhibit a lower effective angular velocity than the low-norm text tokens. Formally, for $\|\bm{h}_{\text{vis}}\|_2 > \|\bm{h}_{\text{txt}}\|_2$, it follows that:
\begin{equation}
    \tan(\theta_{\text{eff, vis}}) < \tan(\theta_{\text{eff, txt}})
\end{equation}
This disparity imparts a higher ``representational inertia" to visual tokens. In Appendix~\ref{app:proofs}, we rigorously prove that this asymmetry leads to an accelerated geometric divergence between the representations of the two modalities, which in turn weakens the underlying similarity signal available to the attention mechanism.

\subsection{Signal Degradation and Attention Collapse}
\label{sec:signal_degradation}

The geometric divergence derived above has a specific and destructive consequence for the attention mechanism: it prevents the semantic alignment of related tokens. In a functioning multimodal model, a text query $\mathbf{q}$ seeks a semantically relevant visual key $\mathbf{k}^+$. Specifically, this retrieval process involves evolving the hidden states into a configuration where their projections are geometrically aligned in the shared metric space, allowing their dot product to be maximized.

However, the ``representational inertia''  creates a mechanical barrier to this alignment. While the low-norm text token rapidly updates its orientation to query the image, the high-norm visual token updates too slowly to reciprocate. This results in a persistent \textit{angular misalignment} for semantically related pairs. Mathematically, the dot product signal for a relevant pair, $S_{\text{rel}} \propto \mathbf{q} \cdot \mathbf{k}^+$, is fundamentally capped by this geometric lag.

In contrast, for irrelevant pairs (noise), the vectors remain approximately orthogonal in high-dimensional space regardless of the inertia. The critical failure mode is thus a collapse in the Signal-to-Noise Ratio (SNR). Because the score of the relevant visual token $S_{\text{rel}}$ is suppressed while the background noise levels remain static, the attention mechanism—governed by the Softmax operation—loses its ability to sharply distinguish the target visual region. Formally, we prove in Appendix~\ref{app:proof_step4} that for the expected attention scores of semantically relevant pairs:
\begin{equation}
    \mathbb{E}[S_{\text{rel}}^{\text{(imb)}}] < \mathbb{E}[S_{\text{rel}}^{\text{(bal)}}]
\end{equation}
This provides a first-principles explanation for the diffuse and unfocused attention maps observed in baseline models (Appendix~\ref{app:attn}). The norm imbalance denies the model the geometric agility required to establish strong correlations, leading to the effective loss of visual information.




\section{Empirical Validation: Probing the Dynamics of Norm Imbalance}
\label{sec:empirical_validation}

Our theoretical analysis provides a formal, first-principles explanation for how norm imbalance can impair multimodal fusion. However, this framework relies on a set of simplifying assumptions to ensure analytical tractability, while the dynamics of large-scale MLLMs are considerably more complex. Therefore, to bridge the gap between our idealized model and real-world behavior, we conduct a series of empirical investigations. These experiments are designed to probe whether the core consequences predicted by our theory—namely, the persistence of norm imbalance and the resulting asymmetric update dynamics—manifest in state-of-the-art Pre-Norm MLLMs. Our investigation is guided by the following research questions:

\begin{itemize}[leftmargin=*]
\item \textbf{RQ1: Existence of Initial Norm Disparity.} Do visual and text tokens exhibit a significant norm mismatch at the modality interface?

    To answer this, we benchmarked the L$_2$ norms from both sides of the modality interface. For the visual modality, we measured the output norms of four representative vision encoders—CLIP~\citep{radford2021learning}, SigLIP~\citep{zhai2023sigmoid}, SigLIP-v2~\citep{tschannen2025siglip}, and MoonViT~\citep{team2025kimi}. For the text modality, we established a baseline by computing the average L$_2$ norm of the text embedding layers from prominent LLMs: Qwen2.5~\citep{bai2025qwen2}, Qwen3~\citep{yang2025qwen3}, and Llama3.2~\citep{grattafiori2024llama}. The analysis of vision encoders was conducted on a dataset of 1000 samples drawn from the MMBench, POPE, and MM-Star benchmarks, which serves as the foundation for all subsequent experiments in this section. The combined results are presented in Table~\ref{tab:interface_norms}.

    \begin{table}[h]
        \centering
        \caption{L$_2$ norms and hidden dimensions at the modality interface: Vision Encoder outputs vs. LLM text embeddings (mean $\pm$ std).}
        \label{tab:interface_norms}
        \begin{tabular}{llcc}
            \toprule
            \textbf{Modality} & \textbf{Model} & \textbf{Dimension} & \textbf{Average L$_2$ Norm} \\
            \midrule
            \multirow{4}{*}{Visual} 
            & CLIP-ViT-large-patch14  & 1024 & 29.30 {\scriptsize $\pm$ 17.12} \\
            & SigLIP-SO-400m-patch14-384 & 1152 & 71.78 {\scriptsize $\pm$ 13.95} \\
            & SigLIP2-SO-400m-patch14-384 & 1152 & 59.37 {\scriptsize $\pm$ 106.08} \\
            & MoonViT-SO-400M & 1152 & 72.17 {\scriptsize $\pm$ 7.13} \\
            \midrule
            \multirow{3}{*}{Text}
            & Qwen2.5-7B-Instruct & 3584 & 0.80 \\
            & Qwen3-8B-Instruct & 4096 & 1.38 \\
            & Llama3.2-3B-Instruct & 3072 & 1.09 \\
            \bottomrule
        \end{tabular}
    \end{table}

    As shown in \ref{tab:interface_norms}, vision encoder output norms are substantially larger than those of text embeddings. This disparity persists because the encoders' contrastive pre-training—even with a final post-norm—is not designed to align with the norm scale of an external LLM's embedding space.

    \item \textbf{RQ2: The Efficacy of the Adapter.} Does the projection adapter harmonize the initial norm disparity before the tokens enter the LLM backbone?

    Within MLLMs, the projector's role is to map visual tokens into the LLM's textual embedding space. A critical question is whether this process also serves to align their norms. To investigate this, we analyzed a suite of prominent models: LLaVA-v1.5~\citep{li2024llava}, Qwen-2.5-VL~\citep{bai2025qwen2}, KimiVL~\citep{team2025kimi}, and GLM-4.1V~\citep{hong2025glm}. For each model, we measured the L$_2$ norm of visual tokens both before and after the projector and compared them to the text token norm. The results are summarized in Table~\ref{tab:adapter_efficacy}.

    \begin{table}[h]
        \centering
        \caption{L$_2$ norms of visual tokens (before and after projector, mean $\pm$ std) vs. text tokens.}
        \label{tab:adapter_efficacy}
        \begin{tabular}{lccc}
            \toprule
            \textbf{Model} & \textbf{Visual (Before Proj.)} & \textbf{Visual (After Proj.)} & \textbf{Text (Embedding)} \\
            \midrule
            LLaVA-v1.5      & 28.71 {\scriptsize $\pm$ 16.87}  & 39.96 {\scriptsize $\pm$ 45.58}  & 1.08 \\
            Qwen-2.5-VL     & 3484.24 {\scriptsize $\pm$ 3882.82} & 56.88 {\scriptsize $\pm$ 25.73}  & 0.86 \\
            KimiVL          & 137.93 {\scriptsize $\pm$ 34.43}  & 4.78 {\scriptsize $\pm$ 2.21}   & 0.85 \\
            GLM-4.1V        & 47.44 {\scriptsize $\pm$ 4.58}   & 4.58 {\scriptsize $\pm$ 2.03}   & 0.80 \\
            \bottomrule
        \end{tabular}
    \end{table}

The results in Table~\ref{tab:adapter_efficacy} reveal a clear spectrum of effectiveness across different projector designs. While sophisticated projectors like those in KimiVL and GLM-4.1V demonstrate a significant capability for norm compression, a substantial disparity between visual and text token norms persists in all analyzed models. This varied effectiveness highlights a key finding: simply inserting a normalization layer within the projector is not a guaranteed solution. With the exception of LLaVA-v1.5, all other models incorporate internal norm layers, yet their final output norms differ by an order of magnitude.

This leads to a broader discussion on current design practices. We note that these architectural choices and their impact on cross-modal norm alignment are seldom, if ever, addressed in the models' respective technical reports.

Notably, the systemic inflation of text embedding norms observed in Qwen-2.5-VL and Qwen-2-VL (detailed in Appendix~\ref{app:embedding_stats}) corroborates that norm discrepancy is a fundamental bottleneck. This phenomenon suggests that the model is forced to inefficiently adjust its static embedding parameters to passively compensate for the massive norm gap at the modality interface.

    \item \textbf{RQ3: Asymmetry in Update Dynamics.} Do visual and textual hidden states exhibit different update rates, as predicted by our theory of asymmetric angular velocity?
    
 This question serves as the most direct empirical test of our theory's core mechanism. We use the cosine similarity between consecutive layers ($l-1$ and $l$) as a proxy for the rate of representational change, a metric conceptually linked to angular velocity. A higher similarity score implies a smaller angular change and thus a slower update rate. We computed this metric for both modalities across all layers to determine if a systematic divergence in their update rates exists, as shown in Figure~\ref{fig:sim}.

    The results in Figure~\ref{fig:sim} confirm our theoretical predictions, revealing a consistent divergence in update rates between visual and text tokens across all analyzed models. Notably, the magnitude of this dynamic asymmetry appears to be directly correlated with the initial norm disparity identified in RQ1 and RQ2. Models with a smaller initial norm gap, such as Kimi-VL and GLM-4.5V, exhibit a less pronounced difference in update rates. Conversely, models with a more severe norm imbalance, like LLaVA-1.5 and Qwen-2.5-VL, demonstrate a significantly larger gap in their update dynamics, providing strong correlational evidence for our theory.

\item \textbf{RQ4: Norm Discrepancy in Models with Visual Embedding Tables.} Does the significant norm disparity persist in architectures utilizing probabilistic visual tokenization, such as Ovis 2.5 ~\citep{lu2025ovis2}?

To verify whether the norm discrepancy and its dynamic consequences persist across different architectural paradigms, we conducted experiments on Ovis-2.5-9B, a model that employs probabilistic visual tokenization. Unlike LLaVA-style models that project continuous features, Ovis utilizes a discrete visual vocabulary derived from a SigLIP2 encoder.

\begin{figure}[h]
    \centering
    \begin{minipage}[c]{0.46\textwidth} 
        \centering
        \small
        \captionof{table}{L$_2$ norms in Ovis 2.5. Note that while the visual tokens are strictly clustered (low std), their magnitude remains drastically higher than text tokens.}
        \label{tab:ovis_norms}
        \setlength{\tabcolsep}{4pt}
        \begin{tabular}{llc}
            \toprule
            \textbf{Modality} & \textbf{Source} & \textbf{Avg L$_2$ Norm} \\
            \midrule
            \multirow{2}{*}{Visual} 
            & SigLIP2 Vocab & \textbf{64.00} \\
            & (Frozen) & {\scriptsize $\pm$ 0.71} \\
            \midrule
            \multirow{2}{*}{Text}
            & Text Vocab & \textbf{1.38} \\
            & (Base LLM) & {\scriptsize $\pm$ 0.32} \\
            \bottomrule
        \end{tabular}
    \end{minipage}
    \hfill 
    \begin{minipage}[c]{0.46\textwidth} 
        \centering
        \includegraphics[width=\linewidth]{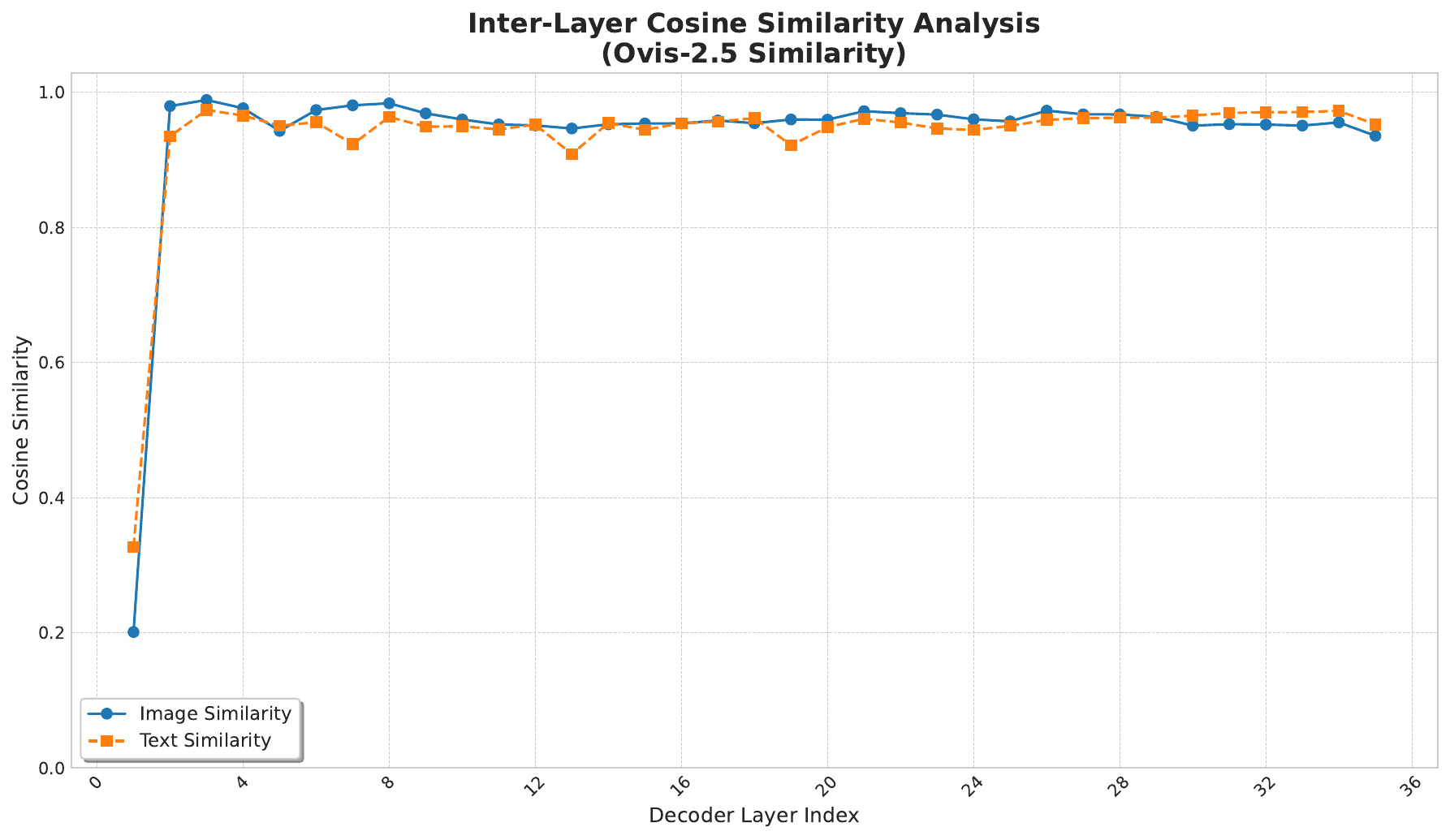} 
        \captionof{figure}{Layer-wise cosine similarity in Ovis 2.5. Despite the massive norm gap (64 vs 1.38), the update rates of visual and text tokens remain remarkably synchronized.}
        \label{fig:ovis_dynamics}
    \end{minipage}
\end{figure}

As detailed in Table~\ref{tab:ovis_norms}, a substantial static norm discrepancy persists in Ovis 2.5: the visual vocabulary exhibits an average norm of 64.00—likely stemming from its derivation via the SigLIP2 encoder—compared to the text norm of 1.38. 

However, Figure~\ref{fig:ovis_dynamics} indicates that the internal dynamics differ from those observed in continuous projection models. Despite the norm disparity, the update rates (proxied by cosine similarity) for visual and text tokens do not exhibit the marked asymmetry found in LLaVA-style architectures. The curves overlap significantly, suggesting that the discrete tokenization paradigm may inherently mitigate the representational inertia usually associated with high-norm inputs.

\begin{figure}[ht]
    \centering 

    \begin{subfigure}[b]{0.48\textwidth}
        \centering
        \includegraphics[width=\textwidth]{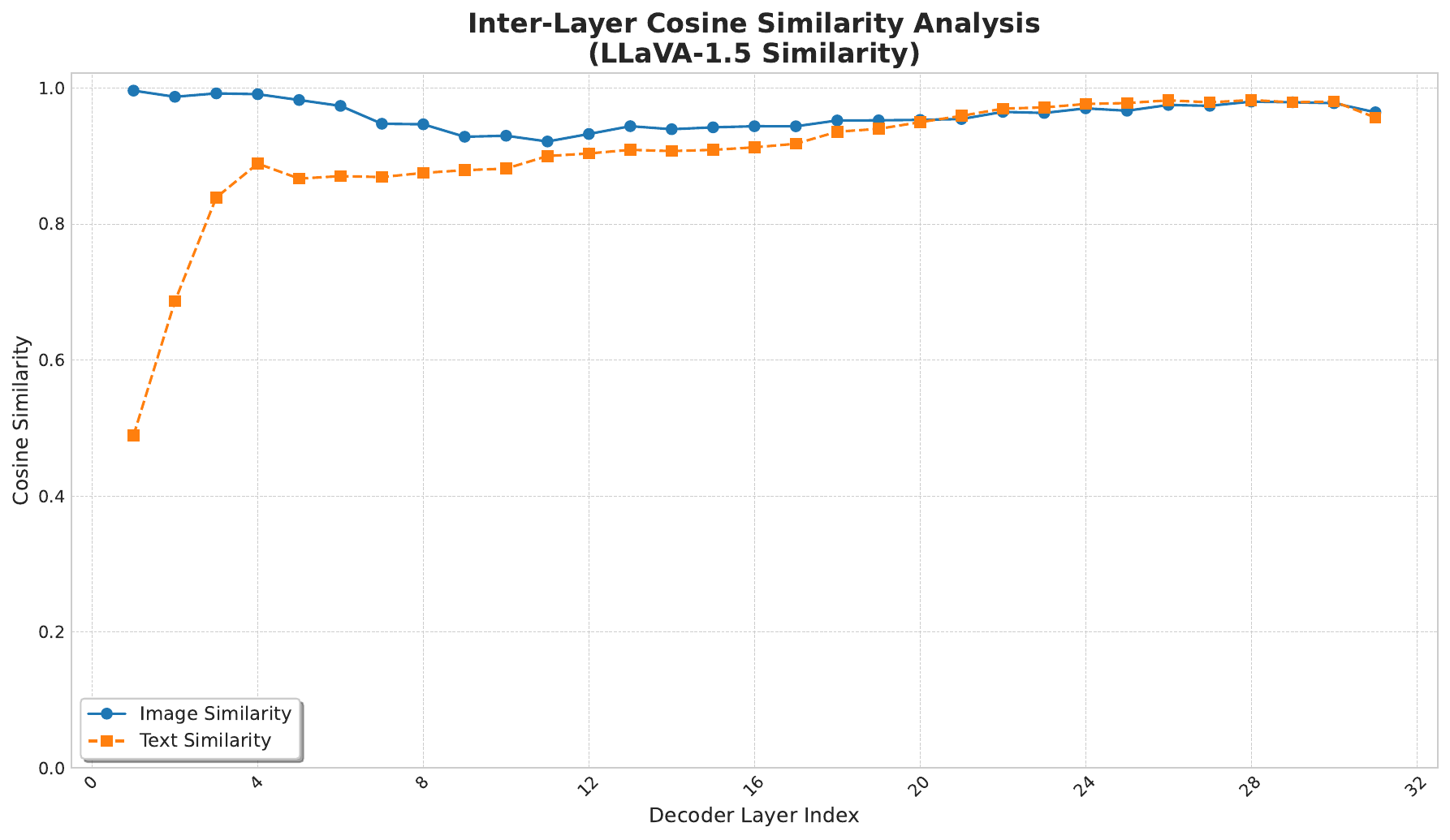}
        \caption{LLaVA-1.5}
        \label{fig:sim-llava}
    \end{subfigure}
    \hfill 
    \begin{subfigure}[b]{0.48\textwidth}
        \centering
        \includegraphics[width=\textwidth]{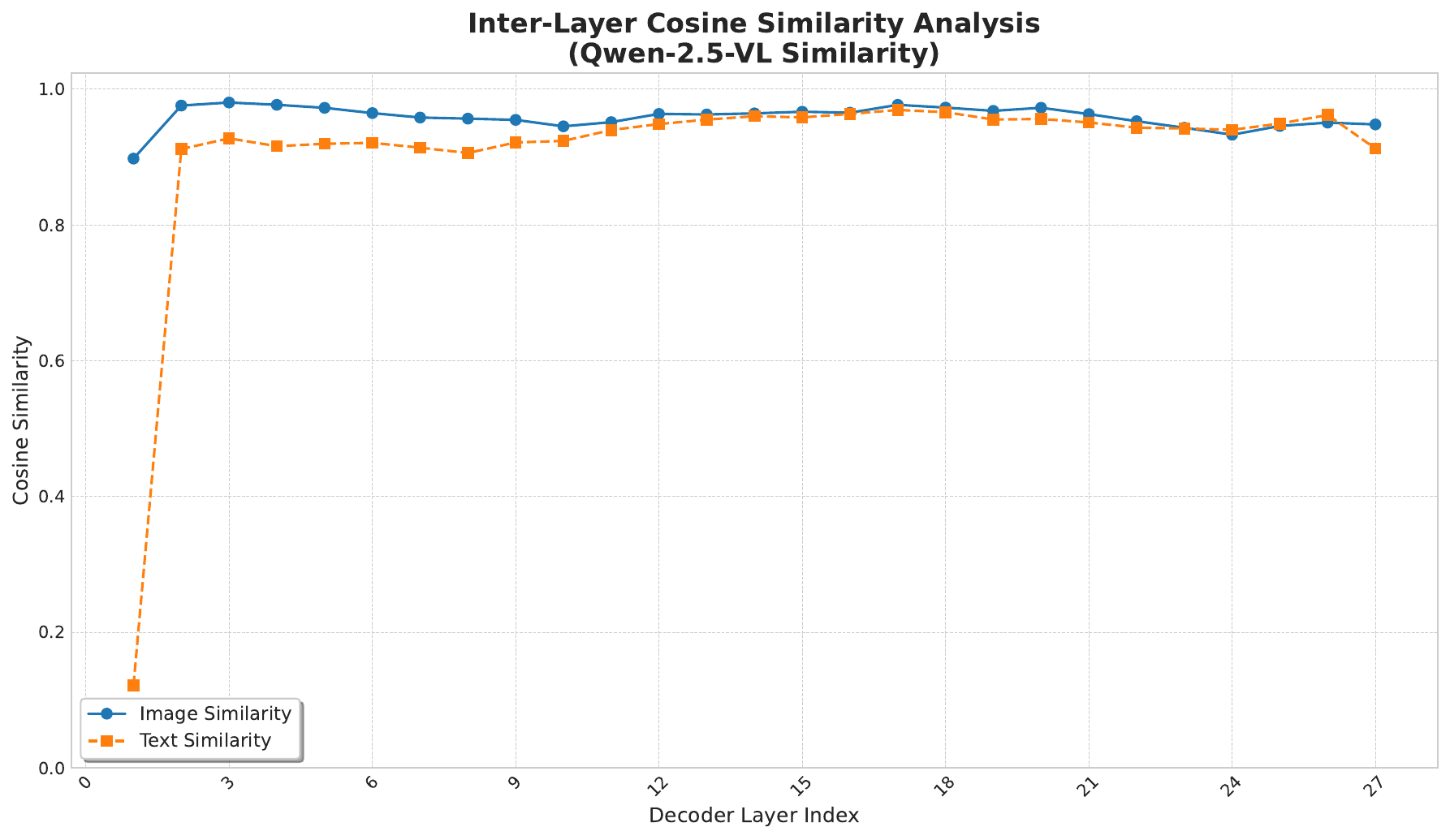}
        \caption{Qwen-2.5-VL}
        \label{fig:sim-qwn}
    \end{subfigure}
    
    \vspace{0.5cm} 

    \begin{subfigure}[b]{0.48\textwidth}
        \centering
        \includegraphics[width=\textwidth]{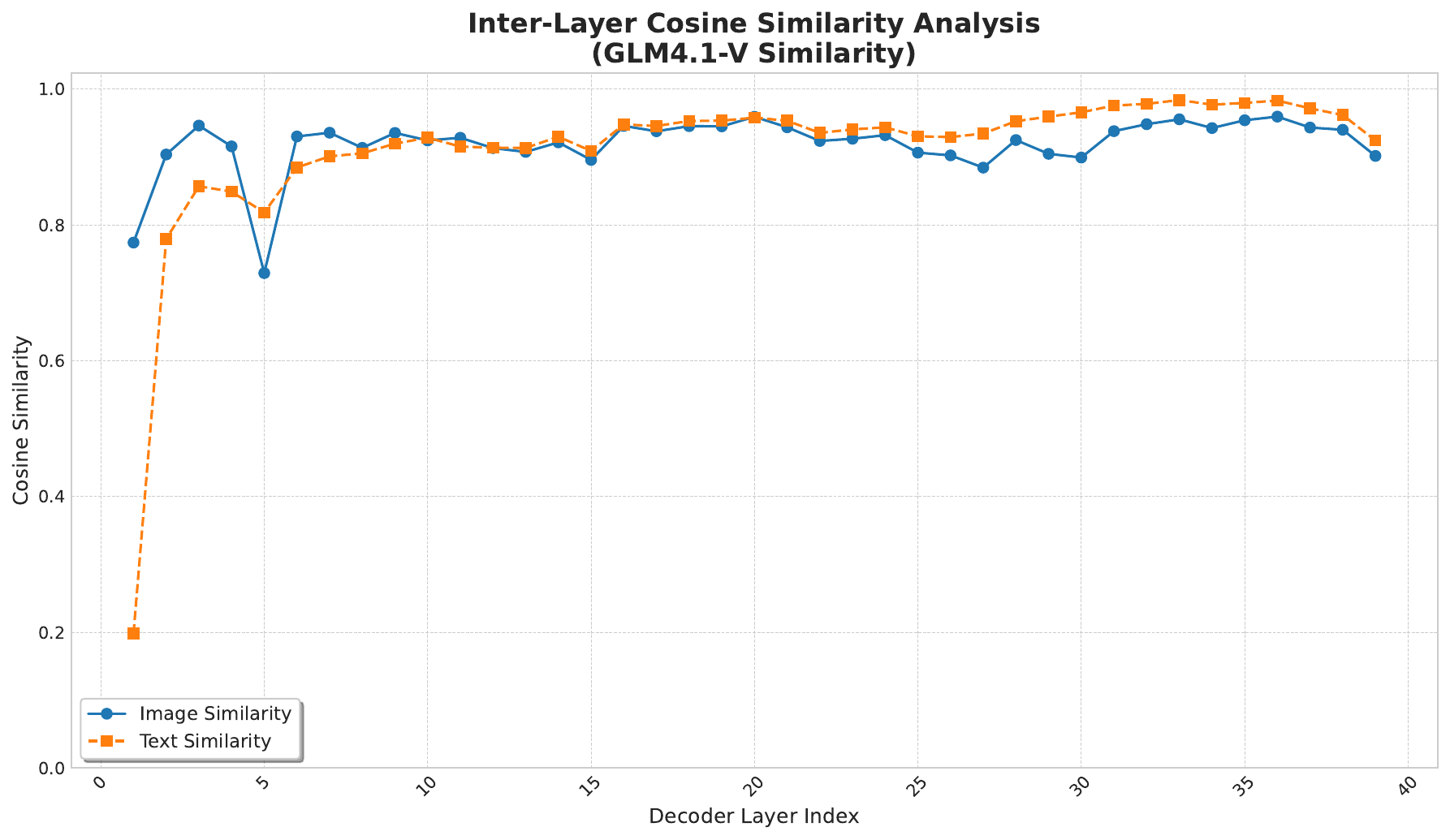}
        \caption{GLM4.1-V}
        \label{fig:sim-glm}
    \end{subfigure}
    \hfill 
    \begin{subfigure}[b]{0.48\textwidth}
        \centering
        \includegraphics[width=\textwidth]{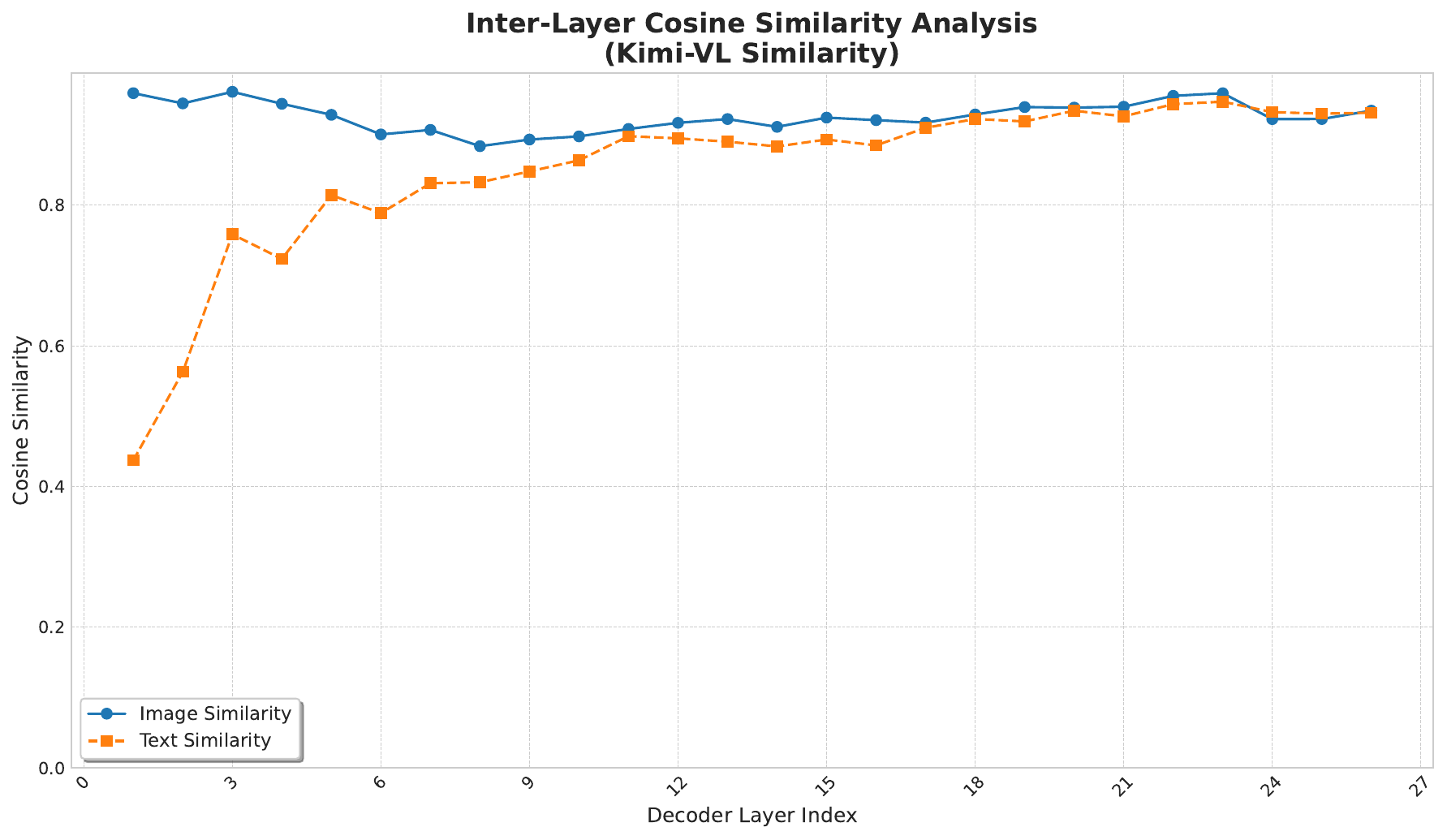}
        \caption{Kimi-VL}
        \label{fig:sim-kimi}
    \end{subfigure}

    \caption{Inter-layer cosine similarity of hidden states for visual vs. text tokens.}
    \label{fig:sim}
\end{figure}


\end{itemize}

\section{Experiments}

Our theoretical analysis in Section~\ref{sec:theory} posited a mechanism whereby norm disparity leads to update asymmetry and, consequently, suppressed visual attention. 
A critical question remains, however: do these internal dynamics translate into a tangible degradation of the model's downstream capabilities? 
To investigate this link between internal mechanics and practical performance, we conducted a series of comparative experiments.

\subsection{Method: Gradient-Aware Norm Alignment}
\label{sec:method_alignment}

To enforce norm alignment between visual and text tokens, we introduce a straightforward intervention: inserting an additional LayerNorm layer immediately after the visual projector. Crucially, the learnable gain parameter $\mathbf{g}$ of this layer is initialized to explicitly match the average L$_2$ norm of the text tokens within the LLM's embedding space.

\paragraph{Determining the Alignment Target.}
First, we compute the target L$_2$ norm, $T$, by averaging the norms of all non-zero vectors from the language model's text embedding matrix, $\mathbf{W}_e$:
\begin{equation}
T = \frac{1}{|\mathcal{W}^*|} \sum_{\mathbf{w} \in \mathcal{W}^*} \|\mathbf{w}\|_2, \quad \text{where } \mathcal{W}^* = \{\mathbf{w} \in \mathbf{W}_e \mid \|\mathbf{w}\|_2 > \epsilon\}.
\label{eq:target_norm}
\end{equation}
Based on this target, the gain parameter $\mathbf{g}$ is initialized to a scalar $g_{\text{init}} = T / \sqrt{D}$.

\paragraph{The Optimization Dilemma.}
A critical challenge arises from the intrinsic properties of the LLM's embedding space. Specifically, pre-trained text embeddings exhibit an \textbf{extremely small magnitude} (e.g., $\|\mathbf{w}\|_2 \approx 1$ even for $D=4096$). Consequently, aligning to this target requires initializing $\mathbf{g}$ to a minute value (e.g., $g_{\text{init}} \approx 0.01$). In standard backpropagation, the gradient flowing back to the vision encoder is scaled by this weight: $\nabla_{\hat{\mathbf{x}}}\mathcal{L} = \nabla_{\mathbf{y}}\mathcal{L} \odot \mathbf{g}$. Thus, a minute initialization triggers a \textit{vanishing gradient problem}, detaching the vision encoder from supervision.

\paragraph{Resolution via Global Weight Compensation.}
To resolve this, we propose a \textbf{Global Weight Compensation} mechanism implemented via a backward hook. Instead of redefining the gradient simply as identity, we actively rescale the gradients to counteract the dampening effect of the small initialization. Formally, let $\bar{g} = \frac{1}{D}\sum_i |g_i|$ be the mean magnitude of the gain vector. We define the backward pass dynamics as:
\begin{equation}
    \text{Backward}(\nabla_{\hat{\mathbf{x}}}\mathcal{L}) = \underbrace{(\nabla_{\mathbf{y}}\mathcal{L} \odot \mathbf{g})}_{\text{Standard Gradient}} \times \underbrace{\frac{1}{\bar{g}}}_{\text{Compensation Factor}}.
    \label{eq:gradient_compensation}
\end{equation}
By multiplying the gradient by the inverse of the global weight magnitude, we effectively cancel out the scaling term ($\mathbf{g} \cdot \bar{g}^{-1} \approx 1$), restoring the gradient flow to a unit scale while maintaining the precise architectural norm alignment in the forward pass.

\subsection{Experimental Setup}
Our experiments are conducted within the LLaVA-1.5 architectural framework. Specifically, we employ Llama-3.2-3B-Instruct ~\citep{grattafiori2024llama} and Qwen2.5-7B-Instruct ~\citep{bai2025qwen2} as the base language model and SigLIP-SO400M-Patch14-384 as the vision encoder.  Further details are provided in Appendix \ref{app:training}.

A detailed list of the evaluation benchmarks is provided in the Appendix \ref{app:training}; for all tasks, we employed a greedy decoding strategy.

\subsection{Results and Analysis}

\subsubsection{Main Performance Gains}
The results in Table~\ref{tab:main-results} reveal a backbone-dependent response. For Llama-3.2, simple norm alignment suffices to yield robust improvements. Conversely, Qwen2.5 exhibits the predicted \textit{vanishing gradient} pathology: the minute initialization stifles gradient flow to the adapter, causing naive alignment to improve only text metrics while stagnating on multimodal tasks. Introducing our Global Weight Compensation (GWC) resolves this optimization bottleneck, unlocking gains across both domains. While this validates that norm alignment is fundamental for holistic model capability, the potential for gradient oscillation with GWC suggests that exploring more stable compression strategies remains a vital direction for future research.

\begin{table*}[!ht]
    \centering
    \caption{Performance comparison on various benchmarks across different backbones. \textbf{w/ Norm (w/o GWC)} denotes norm alignment with naive initialization (leading to gradient vanishing). \textbf{w/ Norm (w/ GWC)} denotes our proposed \textbf{Global Weight Compensation} mechanism. }
    \label{tab:main-results}
    \resizebox{\textwidth}{!}{
    \begin{tabular}{l l r@{\,}l r@{\,}l r@{\,}l r@{\,}l r@{\,}l }
        \toprule
        \textbf{Model} & \textbf{Method} & \multicolumn{2}{c}{$\textbf{MMBench}_{dev}$} & \multicolumn{2}{c}{\textbf{MM-Star}} & \multicolumn{2}{c}{\textbf{POPE}} & \multicolumn{2}{c}{\textbf{SEED-Bench-2}} & \multicolumn{2}{c}{\textbf{OCRBench}} \\
        \midrule
        \multirow{3}{*}{\textbf{Llama}} 
        & w/o Norm & 71.39 & & 37.72 & & 88.14 & & 42.86 & & 40.70 & \\
        & w/ Norm (w/o GWC) & \textbf{72.16} & \scriptsize{(+0.77)} & 41.19 & \scriptsize{(+3.47)} & \textbf{88.88} & \scriptsize{(+0.74)} & \textbf{47.26} & \scriptsize{(+4.40)} & \textbf{45.60} & \scriptsize{(+4.90)} \\
        & \textbf{w/ Norm (w/ GWC)} & 71.82 & \scriptsize{(+0.43)} & \textbf{41.24} & \scriptsize{(+3.52)} & 88.26 & \scriptsize{(+0.12)} & 45.56 & \scriptsize{(+2.70)} & 44.10 & \scriptsize{(+3.40)} \\
        \midrule
        \multirow{3}{*}{\textbf{Qwen}} 
        & w/o Norm & 76.80 & & 50.34 & & 87.51 & & 56.65 & & 47.00 & \\
        & w/ Norm (w/o GWC) & 75.60 & \scriptsize{(-1.20)} & 48.08 & \scriptsize{(-2.26)} & 87.83 & \scriptsize{(+0.32)} & \textbf{59.51} & \scriptsize{(+2.86)} & 47.60 & \scriptsize{(+0.60)} \\
        & \textbf{w/ Norm (w/ GWC)} & \textbf{77.66} & \scriptsize{(+0.86)} & \textbf{50.58} & \scriptsize{(+0.24)} & \textbf{87.87} & \scriptsize{(+0.36)} & 58.27 & \scriptsize{(+1.62)} & \textbf{49.40} & \scriptsize{(+2.40)} \\
        
        \midrule
        \midrule
        
        \textbf{Model} & \textbf{Method} & \multicolumn{2}{c}{$\mathbf{ScienceQA}$} & \multicolumn{2}{c}{$\mathbf{AI2D}$} & \multicolumn{2}{c}{$\mathbf{HellaSwag}$} & \multicolumn{2}{c}{$\mathbf{MMLU}$} & \multicolumn{2}{c}{$\mathbf{Avg}$} \\
        \midrule
        \multirow{3}{*}{\textbf{Llama}} 
        & w/o Norm & 78.99 & & 60.17 & & 65.96 & & 45.19 & & 59.01 & \\
        & w/ Norm (w/o GWC) & 80.83 & \scriptsize{(+1.84)} & \textbf{63.24} & \scriptsize{(+3.07)} & \textbf{66.01} & \scriptsize{(+0.05)} & \textbf{53.21} & \scriptsize{(+8.02)} & \textbf{62.04} & \scriptsize{(+3.03)} \\
        & \textbf{w/ Norm (w/ GWC)} & \textbf{81.00} & \scriptsize{(+2.01)} & 61.85 & \scriptsize{(+1.68)} & 65.99 & \scriptsize{(+0.03)} & 51.60 & \scriptsize{(+6.41)} & 61.27 & \scriptsize{(+2.26)} \\
        \midrule
        \multirow{3}{*}{\textbf{Qwen}} 
        & w/o Norm & 82.81 & & 73.74 & & 70.56 & & 71.02 & & 68.49 & \\
        & w/ Norm (w/o GWC) & 82.20 & \scriptsize{(-0.61)} & 72.70 & \scriptsize{(-1.04)} & \textbf{73.73} & \scriptsize{(+3.17)} & 71.14 & \scriptsize{(+0.12)} & 68.71 & \scriptsize{(+0.22)} \\
        & \textbf{w/ Norm (w/ GWC)} & \textbf{82.93} & \scriptsize{(+0.12)} & \textbf{74.61} & \scriptsize{(+0.87)} & 71.64 & \scriptsize{(+1.09)} & \textbf{71.74} & \scriptsize{(+0.72)} & \textbf{69.41} & \scriptsize{(+0.92)} \\
        \bottomrule
    \end{tabular}
    }
\end{table*}

We visualized the attention matrices in Appendix \ref{app:attn}. The analysis reveals that in the baseline model, text-to-image attention is inappropriately and broadly concentrated on the bottom regions of the image. This suggests a failure in semantic fusion, caused by the positional proximity bias introduced by RoPE's  distance-decay property. In stark contrast, our norm-aligned model's text-to-image attention correctly converges on the specific image regions that are semantically relevant to the text query. This visual evidence provides direct confirmation that our method successfully restores meaningful cross-modal attention by correcting the underlying dynamic imbalance, thus enabling true feature fusion.

We also analyze the temporal evolution of layer-wise similarity and its convergence behavior during pre-training in Appendix~\ref{app:training dynamics}.

\subsubsection{Ablation Study: The Critical Role of Initialization}
To isolate the effect of our proposed initialization strategy, we conducted a crucial ablation study. We compared our method against a baseline where the added LayerNorm layer was initialized with default parameters (gain=1, bias=0). We analyzed the learned parameters immediately after the LLaVA Stage 1 pre-training phase. As shown in Table~\ref{tab:init_ablation}, the parameters of the default-initialized layer remained largely unchanged from their initial state, indicating that the optimization process failed to begin effectively without a reasonable starting point. In contrast, our method shows meaningful parameter updates even after this initial stage. This demonstrates that simply adding a norm layer is insufficient; our targeted initialization is essential to place the parameters in a gradient-rich region of the loss landscape, enabling effective learning.

\begin{table}[h]
\centering
\caption{Learned parameters of the added LayerNorm layer after Stage 1 pre-training, comparing default initialization with our proposed strategy.}
\label{tab:init_ablation}
\begin{tabular}{llcc}
    \toprule
    \textbf{Parameter} & \textbf{Metric} & \textbf{Default Init} & \textbf{Our Init} \\
    & & \textbf{(After Stage 1)} & \textbf{(After Stage 1)} \\
    \midrule
    \multirow{2}{*}{\textbf{Gain (g)}} 
    & L$_2$ Norm & 53.2500 & 2.2812 \\
    & Mean of Abs. & 0.9609 \scriptsize{($\pm$ 0.0005)} & 0.0400 \scriptsize{($\pm$ 0.0001)} \\
    \midrule
    \textbf{Bias ($\bm{\beta}$)}
    & Mean of Abs. & 0.0175 \scriptsize{($\pm$ 0.0002)} & 0.0152 \scriptsize{($\pm$ 0.0001)} \\
    \bottomrule
\end{tabular}
\end{table}

\subsubsection{Diagnostic Analysis: Verifying the Mechanism of Improvement}
Finally, we performed a diagnostic analysis to investigate whether the performance gains correlate with the mitigation of the dynamic imbalance we identified. We analyzed the internal states of the fully trained model (after Stage 2) comparing the baseline against our norm-aligned method. Figure~\ref{fig:main_figure_label} visualizes two key metrics:
\begin{itemize}
    \item \textbf{Layer-wise L2 Norms (Fig.~\ref{fig:left_panel}):} The left panel indicates that our method effectively harmonizes the visual token norms with the text token norms starting from the initial layers and maintains this alignment throughout the model's depth. In contrast, the baseline model exhibits a marked and persistent norm divergence.
    \item \textbf{Inter-layer Cosine Similarity (Fig.~\ref{fig:right_panel}):} The right panel illustrates the corresponding shift in update dynamics. With our intervention, the update rates (proxied by cosine similarity) of visual and text tokens exhibit significantly improved synchronization. This observation suggests that our method successfully mitigates the extreme asymmetry present in the baseline, where the persistently high similarity of visual tokens pointed to substantial ``representational inertia.''
\end{itemize}
Collectively, these findings corroborate our theoretical framework: correcting the static norm disparity appears to alleviate the asymmetric update rates, thereby facilitating the observed improvements in downstream performance.

\begin{figure}[ht!]
    \centering 

    \begin{subfigure}[b]{0.48\textwidth}
        \centering
        \includegraphics[width=\textwidth]{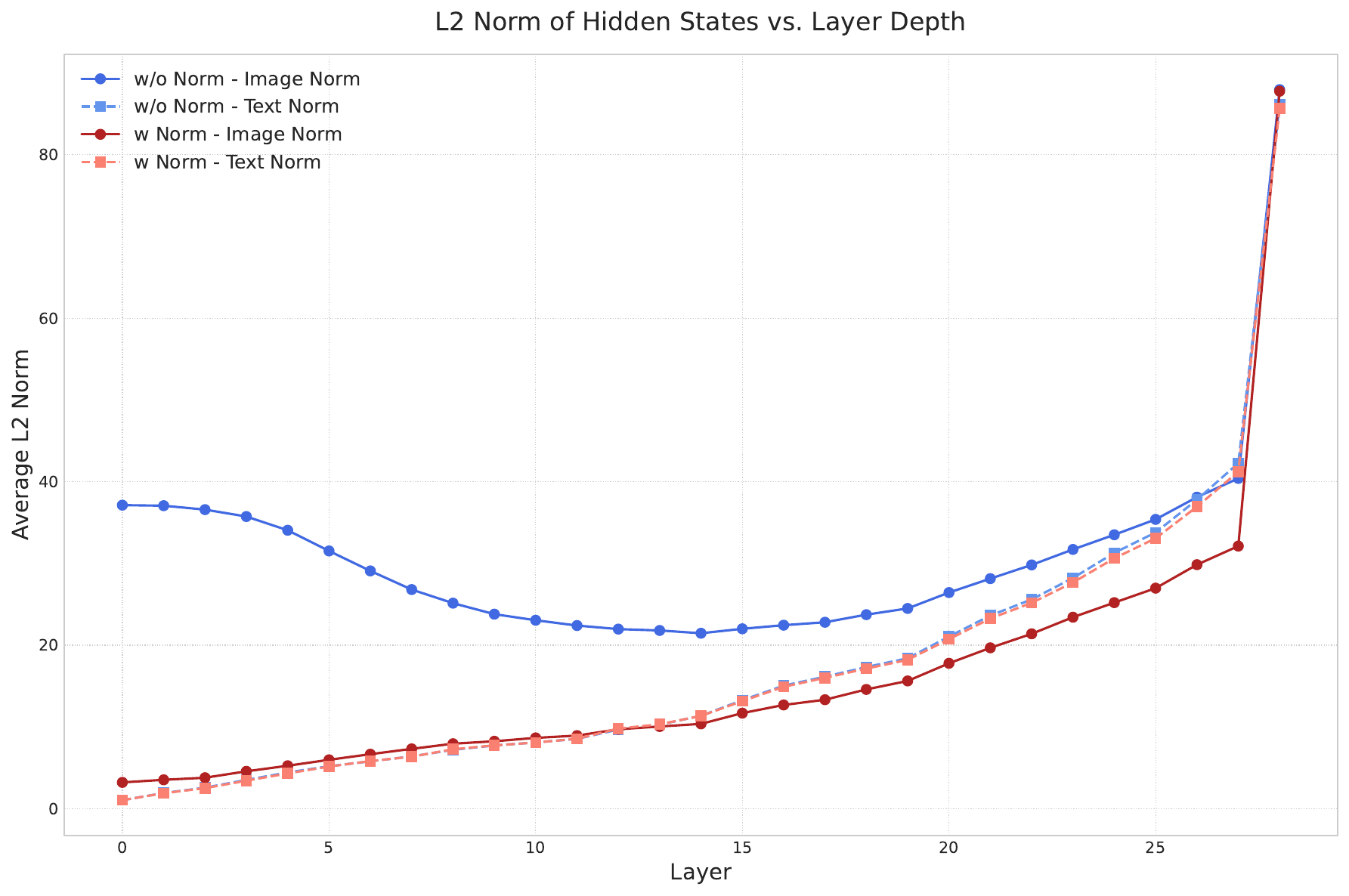}
        \caption{}
        \label{fig:left_panel} 
    \end{subfigure}
    \hfill
    \begin{subfigure}[b]{0.48\textwidth}
        \centering
        \includegraphics[width=\textwidth]{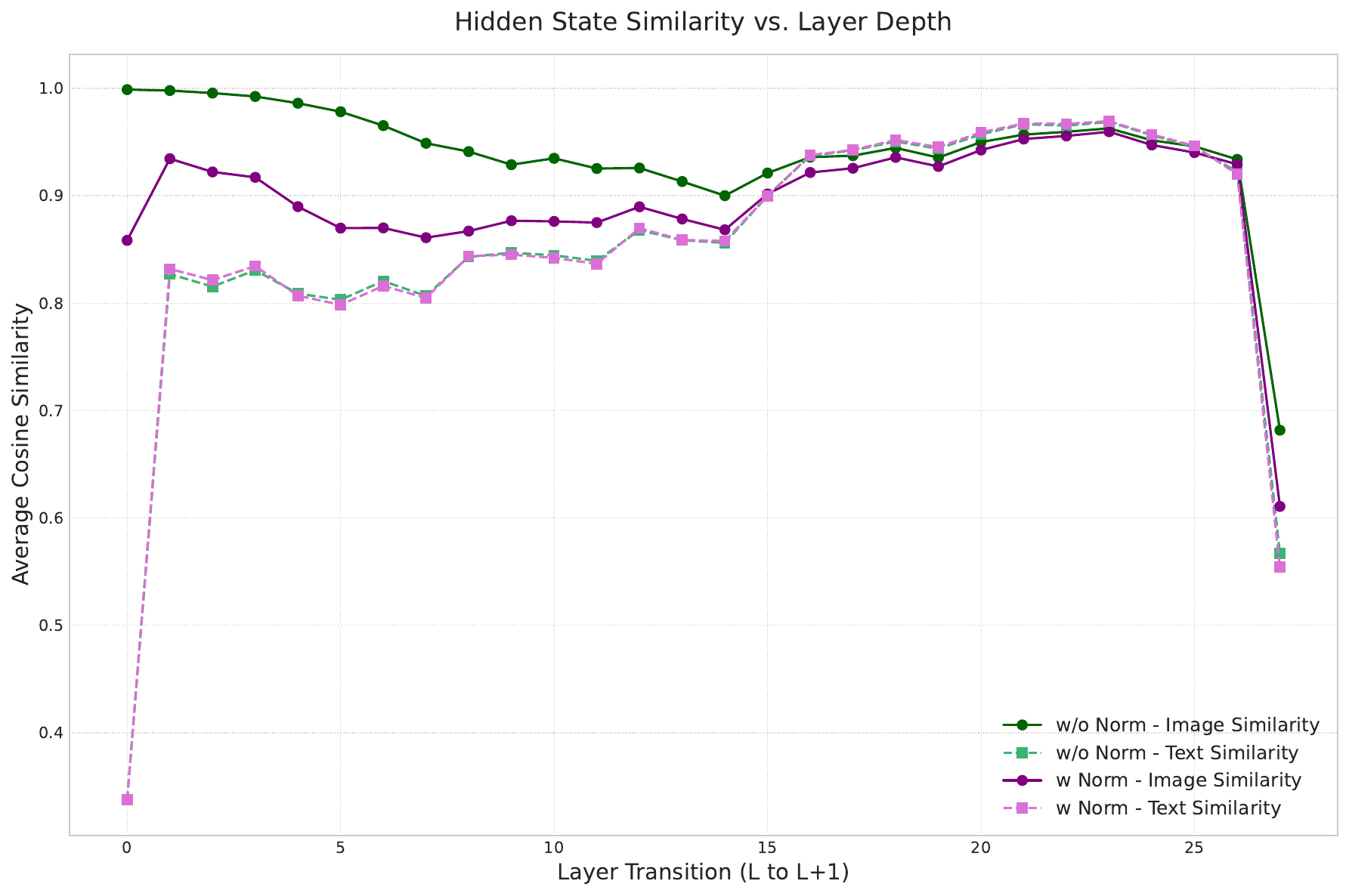}
        \caption{}
        \label{fig:right_panel} 
    \end{subfigure}

    \caption{A comparison of token dynamics with and without our norm alignment method. (a) shows the layer-wise L2 norm evolution, while (b) shows the inter-layer cosine similarity, which acts as a proxy for update rate.}
    \label{fig:main_figure_label} 
\end{figure}
 
\section{conclusion}
Our analysis reveals a critical, previously undiscovered dynamic within Pre-Norm MLLMs: an ``asymmetric update." We have formalized this dynamic theoretically and validated it empirically, showing it to be a direct consequence of the severe norm disparity between visual and text tokens. This analysis demonstrates that the dynamic manifests as ``representational inertia" in high-norm visual tokens, fundamentally impairing cross-modal fusion at an architectural level. It was this deep analysis of the mechanism that motivated our targeted solution of enforcing norm alignment via a single LayerNorm with global Weight Compensation. The resulting significant performance gains on both multimodal and, critically, text-only tasks, serve as compelling validation for our core analysis, confirming that resolving this dynamic imbalance unlocks the model's full potential. 

\clearpage
\bibliography{iclr2026_conference}
\bibliographystyle{iclr2026_conference}

\clearpage
\appendix

\section{Background \& Related work}
\subsection{Multimodal Large Language Models}
The remarkable success and emergent capabilities of Large Language Models (LLMs) in natural language processing have catalyzed efforts to generalize their powerful abilities to other modalities ~\citep{achiam2023gpt,hurst2024gpt,comanici2025gemini,fu2025first,bai2025qwen2,yang2025qwen3,wu2024deepseek,chen2025avocado}. In the multimodal domain, this trend has spurred the rapid development of Multimodal Large Language Models (MLLMs).

Early explorations in MLLMs, such as Flamingo ~\citep{alayrac2022flamingo} and BLIP-2 ~\citep{li2023blip}, primarily relied on the cross-attention mechanism for modality fusion. A subsequent evolution witnessed a paradigm shift toward a simpler and more efficient approach, an approach popularized by LLaVA ~\citep{liu2023visual} that has now become the undisputed mainstream. The LLaVA-style architecture eschews the complexity of cross-attention in favor of a more direct solution: it employs a simple projection module, typically a Multi-Layer Perceptron (MLP), to map visual token features directly into the LLM's word embedding space. Conceptually, this treats the image as a sequence of special ``visual words" prepended to the text input, which are then processed uniformly by the LLM in an auto-regressive manner. The simplicity, scalability, and powerful performance of this paradigm—particularly when combined with visual instruction tuning—have firmly established it as  the foundational blueprint for the vast majority of today's advanced MLLMs

Despite the dominance of the LLaVA paradigm, the pursuit of optimal cross-modal fusion remains an active area of research. Investigators continue to experiment with more sophisticated projector designs ~\citep{team2025kimi,hong2025glm,cha2024honeybee}, alternative representation schemes like visual vocabularies ~\citep{lu2024ovis}, or deeper fusion strategies ~\citep{meng2024deepstack}, novel methods for adapting the core architectures of large models for multimodal scenarios  ~\citep{deng2025emerging,wei2025videorope,li2025id}.   Beyond these methods, other researchers have approached the challenge from a data-centric perspective ~\citep{bai2024survey,liu2024synthvlm}.

\subsection{Normalization}
Normalization layers are a cornerstone of modern deep learning, designed to stabilize the training process and accelerate model convergence. By re-scaling the distribution of activations between layers, normalization effectively mitigates the internal covariate shift problem and ensures smooth gradient propagation in deep networks. While Batch Normalization (BN) ~\citep{ioffe2015batch} was a seminal work in this area, its dependency on batch size makes it less suitable for natural language processing tasks with variable sequence lengths. Layer Normalization (LayerNorm) was therefore introduced, performing normalization along the feature dimension independently of the batch, and it quickly became the standard for Transformer architectures ~\citep{vaswani2017attention}. This paradigm was further refined by RMSNorm ~\cite{zhang2019root}, which improves computational efficiency by removing the mean re-centering step while maintaining performance, leading to its widespread adoption in many modern LLMs such as Llama.

A critical design axis in Transformer architectures is the placement of the normalization layer relative to the residual connection, giving rise to the Pre-Norm and Post-Norm paradigms ~\cite{xiong2020layer}. The original Post-Norm design applies normalization after the residual addition, which can help preserve strong representational fidelity but is often prone to training instability in deep models. In contrast, the Pre-Norm approach places normalization within the residual branch, greatly improving gradient flow and training stability by maintaining a ``clean" skip-connection path. This has made it the de facto standard for large-scale language models. However, the Pre-Norm architecture has a well-documented side effect: because the hidden states on the main path are never re-normalized, their L2 norm tends to accumulate and grow with network depth ~\citep{zhuo2025hybridnorm}.

\citet{qi2025beyond} have also observed the norm discrepancy between visual and textual tokens in MLLMs, they predominantly attribute the resulting performance degradation to the failure of Rotary Positional Embeddings (RoPE ~\citep{su2024roformer} )  in handling high-magnitude features. However, we argue that this attribution overlooks the fundamental mechanics of the Pre-Norm architecture. Specifically, in this paradigm, normalization is applied before the query and key projections; consequently, the vectors processed by RoPE are already normalized.

Recently, the community has begun to re-evaluate this classic dichotomy, spurring research into alternative placement strategies.  Recent works, like Peri-Norm ~\citep{kim2025peri} and Hybrid-Norm ~\citep{zhuo2025hybridnorm}, have begun to explore combining normalization at different points of the residual connection to merge the benefits of both paradigms. These efforts, however, aim to find a universally optimal static design for unimodal models. In contrast, our work takes a diagnostic perspective: rather than proposing a new general architecture, we are the first to deeply analyze and reveal how the de facto standard Pre-Norm design itself directly induces a destructive dynamic imbalance within the multimodal context.

\section{Appendix: Detailed Derivation and Proofs}
\label{app:proofs}

This appendix provides the full mathematical derivation for the claims made in Section~\ref{sec:signal_degradation}. We connect the architectural cause (norm imbalance) to the functional failure (SNR collapse) through a continuous four-step theoretical framework.

\subsection{Step 1: Kinematics — Norm Imbalance Induces Velocity Asymmetry}
\label{app:step1}

We begin by establishing the kinematic relationship between token norm and update speed. Consider a hidden state $\mathbf{h}$ updated by a residual vector $\Delta\mathbf{h}$. The update can be decomposed into a parallel component (magnitude scaling) and an orthogonal component (direction rotation). The effective angular velocity, $\theta_{\text{eff}}$, is determined by the ratio of the orthogonal update to the current state magnitude.

Under the assumptions of \textbf{Uniform Update Magnitude} ($\|\Delta\mathbf{h}\| \approx C$) and \textbf{Consistent Update Geometry}, the tangent of the effective rotation angle is derived as:
\begin{equation}
    \tan(\theta_{\text{eff}}) = \frac{C \sin(\phi)}{\|\mathbf{h}\|_2 + C \cos(\phi)}
\end{equation}
This formula establishes an inverse relationship: \textbf{higher norm implies lower angular velocity}. Applying this to the visual ($\mathbf{h}_{\text{vis}}$) and text ($\mathbf{h}_{\text{txt}}$) tokens, where $\|\mathbf{h}_{\text{vis}}\| \gg \|\mathbf{h}_{\text{txt}}\|$, we arrive at a fundamental velocity asymmetry:
\begin{equation}
    \theta_{\text{vis}} \ll \theta_{\text{txt}}
\end{equation}
\textbf{Transition:} Having established that the modalities rotate at significantly different speeds, we next model how this velocity mismatch affects the evolution of their alignment over time.

\subsection{Step 2: Dynamics — Recursive Decay of Semantic Similarity}
\label{app:step2}

We now derive the law governing the evolution of the cosine similarity between the two modalities across layers. Let $\Theta^{(l)}$ be the angle between the visual and text hidden states at layer $l$.

\paragraph{Theorem 1: Recursive Decay.}
Assuming the rotational updates are drawn from a symmetric distribution in the orthogonal subspace, the expected cosine similarity evolves according to:
\begin{equation}
    \mathbb{E}[\cos(\Theta^{(l+1)}) \mid \Theta^{(l)}] = \gamma_{\text{eff}}^{(l)} \cdot \cos(\Theta^{(l)})
\end{equation}
where the retention factor $\gamma_{\text{eff}}^{(l)}$ is the product of the cosines of the individual angular velocities:
\begin{equation}
    \gamma_{\text{eff}}^{(l)} = \cos(\theta_{\text{vis}}^{(l)}) \cos(\theta_{\text{txt}}^{(l)})
\end{equation}
\textbf{Transition:} This theorem tells us that alignment retention depends on the product $\cos(\theta_{\text{vis}})\cos(\theta_{\text{txt}})$. The critical question is: Does the extreme asymmetry derived in Step 1 make this retention factor better or worse compared to a balanced scenario?

\subsection{Step 3: Optimization — Asymmetry Accelerates Geometric Divergence}
\label{app:step3}

We compare the decay factor $\gamma$ in the Imbalanced case ($\theta_{\text{vis}} \ll \theta_{\text{txt}}$) against the Balanced case ($\theta_{\text{vis}} \approx \theta_{\text{txt}}$).

\begin{lemma}[Asymmetry Maximizes Decay Rate]
    For a fixed total update potential (represented by the geometric mean of angular velocities), the similarity retention factor $\gamma = \cos(\theta_1)\cos(\theta_2)$ is maximized when velocities are symmetric ($\theta_1 = \theta_2$). Conversely, asymmetry strictly decreases $\gamma$.
\end{lemma}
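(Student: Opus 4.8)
The plan is to turn the constrained optimization of $\gamma(\theta_1,\theta_2)=\cos\theta_1\cos\theta_2$ subject to the geometric-mean constraint $\sqrt{\theta_1\theta_2}=v_0$ (equivalently $\theta_1\theta_2=v_0^2$) into a one-dimensional monotonicity statement in an explicit "asymmetry" parameter. First I would apply the product-to-sum identity $\cos\theta_1\cos\theta_2=\tfrac12\big(\cos(\theta_1-\theta_2)+\cos(\theta_1+\theta_2)\big)$, which cleanly separates the dependence on the spread $d:=|\theta_1-\theta_2|$ from the sum $s:=\theta_1+\theta_2$. Then I parametrize the constraint manifold by $u\ge 0$ through $\theta_1=v_0e^{u}$, $\theta_2=v_0e^{-u}$, so that $s=2v_0\cosh u$ and $d=2v_0\sinh u$ are each strictly increasing in $u$, with $u=0$ corresponding exactly to the symmetric point $\theta_1=\theta_2=v_0$. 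Substituting gives $\gamma(u)=\tfrac12\big(\cos(2v_0\sinh u)+\cos(2v_0\cosh u)\big)$.

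The core step is then to show $\gamma$ is strictly decreasing on $u>0$. Differentiating, $\gamma'(u)=-v_0\cosh u\,\sin(2v_0\sinh u)-v_0\sinh u\,\sin(2v_0\cosh u)$, which is strictly negative for $u>0$ as soon as the two arguments $2v_0\sinh u$ and $2v_0\cosh u$ lie in $(0,\pi)$, forcing both sine factors to be positive. In the regime relevant to the Pre-Norm setting — per-layer rotations are modest, i.e. $\theta_1,\theta_2\in[0,\pi/2)$, corresponding to a small ratio $C^{(l)}/\|\bm{h}\|_2$ — one automatically has $d<\pi/2$ and $s<\pi$, which secures the sign. Hence $\gamma$ attains its unique maximum at $u=0$, i.e. at $\theta_1=\theta_2$, and any $\theta_1\neq\theta_2$ strictly lowers $\gamma$, which is precisely the claim.

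The main obstacle — and the point I would state carefully in the hypotheses rather than gloss over — is the domain restriction: without the assumption that both angular velocities stay below $\pi/2$ (equivalently, that $v_0$ is small enough that the more asymmetric branch does not drive one angle past $\pi/2$), $\cos$ ceases to be monotone, one of the sine terms can flip sign, and $\gamma$ can even turn negative, so the inequality genuinely fails. I would therefore make the small-rotation regime explicit, which is consistent with the rest of the paper's framework where the update magnitude is decoupled from and comparable to or smaller than the state norm.

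As a self-contained alternative that avoids the trigonometric bookkeeping entirely, I would instead observe that $h(x):=-\log\cos(e^{x})$ is convex on the relevant interval, since $h''(x)=e^{x}\tan(e^{x})+e^{2x}\sec^{2}(e^{x})>0$ whenever $e^{x}\in(0,\pi/2)$. Writing $x_i=\log\theta_i$, the geometric-mean constraint pins the arithmetic mean $\tfrac12(x_1+x_2)=\log v_0$, and since $\gamma=\exp\big(-h(x_1)-h(x_2)\big)$, maximizing $\gamma$ is equivalent to minimizing $h(x_1)+h(x_2)$; Jensen's inequality then gives $h(x_1)+h(x_2)\ge 2h(\log v_0)$ with equality iff $x_1=x_2$, i.e. iff $\theta_1=\theta_2$. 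This route reproduces the lemma, including the strictness of the decrease under any asymmetry, and makes transparent that the only hypothesis truly needed is that both angular velocities lie in $[0,\pi/2)$.
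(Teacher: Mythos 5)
Your proof is correct, but it follows a genuinely different route from the paper, and it even formalizes the constraint differently. The paper's proof is a two-line argument: it rewrites $1/\gamma^2 = (1+\tan^2\theta_1)(1+\tan^2\theta_2)$ and, taking the ``fixed total update potential'' to mean a \emph{fixed product of tangents} $\tan\theta_1\tan\theta_2$, applies AM--GM to conclude that $\tan^2\theta_1+\tan^2\theta_2$ (and hence $1/\gamma^2$) is minimized exactly at $\tan\theta_1=\tan\theta_2$. You instead read the constraint literally as a fixed product of the angles, $\theta_1\theta_2=v_0^2$, parametrize the constraint curve by the asymmetry variable $u$ with $\theta_{1,2}=v_0e^{\pm u}$, and show $\gamma(u)=\tfrac12\bigl(\cos(2v_0\sinh u)+\cos(2v_0\cosh u)\bigr)$ is strictly decreasing for $u>0$; your Jensen alternative via the strict convexity of $h(x)=-\log\cos(e^x)$ on $e^x\in(0,\pi/2)$ is an even cleaner packaging of the same idea. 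The two constraint normalizations ($\tan\theta_1\tan\theta_2$ fixed versus $\theta_1\theta_2$ fixed) are not identical, but both are reasonable formalizations of the lemma's informal ``geometric mean of angular velocities,'' they coincide to leading order in the small-rotation regime the framework assumes, and each yields the stated conclusion. What your treatment buys is rigor about the domain: you correctly flag that the claim genuinely fails without $\theta_1,\theta_2\in[0,\pi/2)$ (e.g.\ an angle pushed past $\pi/2$ can make $\gamma$ negative or, past $3\pi/2$, even exceed the symmetric value), a hypothesis the paper's AM--GM proof also needs implicitly (tangents must be defined and $\gamma>0$ for the $1/\gamma^2$ reduction) but never states; what the paper's version buys is brevity and a constraint (product of tangents) that matches the kinematic quantity $\tan(\theta_{\text{eff}})$ actually derived in Step~1.
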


\begin{proof}
    Maximizing $\gamma$ is equivalent to minimizing $1/\gamma^2 = (1+\tan^2\theta_1)(1+\tan^2\theta_2)$. By the AM-GM inequality, for a fixed product of tangents, the sum $\tan^2\theta_1 + \tan^2\theta_2$ is minimized when $\tan\theta_1 = \tan\theta_2$. Thus, the asymmetric condition $\|\mathbf{h}_{\text{vis}}\| \gg \|\mathbf{h}_{\text{txt}}\|$ forces the system into a suboptimal regime where similarity decays strictly faster than in the balanced case.
\end{proof}

\paragraph{Definition of the Lag Angle $\Delta \phi$.}
Since the similarity decays faster in the imbalanced case, the final angle between the vectors will be strictly larger. We define this accumulated geometric deficit as the \textbf{Lag Angle}:
\begin{equation}
    \Delta \phi = \Theta_{\text{imb}} - \Theta_{\text{bal}} > 0
\end{equation}
\textbf{Transition:} We have proven that visual tokens lag behind the optimal alignment direction by $\Delta \phi$. Finally, we must determine if the attention mechanism can tolerate this lag or if it leads to functional failure.

\subsection{Step 4: From Geometric Misalignment to Signal-to-Noise Ratio Collapse}
\label{app:proof_step4}

Finally, we connect the geometric lag $\Delta \phi$ derived in Step 3 to the failure of the attention mechanism. We analyze the interaction between the learnable projection matrices $\mathbf{W} = \mathbf{W}_Q^T \mathbf{W}_K$ and the degraded input statistics.

\paragraph{1. Spectral Degradation of the Semantic Signal.}
The expected attention score is given by the bilinear form $\mathbb{E}[S] = \text{Tr}(\mathbf{W} \mathbf{C}_{vu})$, where $\mathbf{C}_{vu} = \mathbb{E}[\mathbf{v}\mathbf{u}^T]$ is the cross-covariance matrix of the inputs.
In the Imbalanced case, the geometric lag $\Delta \phi$ causes the visual vector $\mathbf{v}_{\text{imb}}$ to deviate from the optimal alignment direction. By performing an orthogonal decomposition along the signal direction (aligned with the text vector $\mathbf{u}$), we have:
\begin{equation}
    \mathbf{v}_{\text{imb}} = \cos(\Delta \phi) \mathbf{v}_{\text{signal}} + \sin(\Delta \phi) \mathbf{v}_{\perp}
\end{equation}
Since the orthogonal component $\mathbf{v}_{\perp}$ contains no correlation with the query, the singular values (spectral energy) of the covariance matrix are dampened:
\begin{equation}
    \mathbf{C}_{\text{imb}} \approx \cos(\Delta \phi) \mathbf{C}_{\text{bal}}
\end{equation}

\paragraph{2. The Fundamental Spectral Upper Bound.}
We consider the maximization of the attention score under the constraint that the model parameters must remain finite. Let the magnitude of the projection matrix be bounded by some fixed radius $R$ (i.e., $\|\mathbf{W}\|_F \le R$).
We invoke \textbf{Von Neumann's Trace Inequality}, which states that the trace of a matrix product is bounded by the dot product of their singular values: $\text{Tr}(\mathbf{W}\mathbf{C}) \le \sum \sigma_i(\mathbf{W})\sigma_i(\mathbf{C})$.
Applying this, the maximum achievable expected score is strictly limited by the spectrum of the input covariance:
\begin{equation}
    \max_{\|\mathbf{W}\|_F \le R} \mathbb{E}[S_{\text{rel}}^{\text{imb}}] \le R \sum \sigma_i(\mathbf{C}_{\text{imb}}) \approx \cos(\Delta \phi) \left( R \sum \sigma_i(\mathbf{C}_{\text{bal}}) \right)
\end{equation}
Since $\Delta \phi > 0$, we have $\cos(\Delta \phi) < 1$. This proves a structural theorem: **For any fixed capacity $R$ of the projection layers, the norm-induced geometric lag strictly reduces the maximum extractable attention score.** The model cannot recover the lost signal energy without unboundedly increasing its weights.

\paragraph{3. Signal-to-Noise Ratio (SNR) Collapse.}
The attention mechanism's efficacy relies on the gap between signal and noise:
\begin{equation}
    \text{Gap}_{\text{imb}} = \mathbb{E}[S_{\text{rel}}^{\text{imb}}] - \mathbb{E}[S_{\text{irrel}}^{\text{imb}}]
\end{equation}
\begin{itemize}
    \item \textbf{Signal:} As proven above, the maximum signal is capped by the factor $\cos(\Delta \phi)$.
    \item \textbf{Noise:} For irrelevant tokens, the vectors are initialized to be approximately orthogonal ($\Theta_{\text{irrel}} \approx 90^{\circ}$). The geometric divergence acts as an isotropic random walk, introducing no systematic bias. Thus, the noise floor remains constant: $\mathbb{E}[S_{\text{irrel}}^{\text{imb}}] \approx 0$.
\end{itemize}
Consequently, the Signal-to-Noise Ratio collapses ($\text{Gap}_{\text{imb}} < \text{Gap}_{\text{bal}}$). This reduced gap forces the Softmax function to produce a higher-entropy, more uniform distribution, mathematically explaining the diffuse attention maps observed in experiments.

\section{Empirical Analysis of Embedding Norm Shifts}
\label{app:embedding_stats}

To investigate the impact of multimodal training on the embedding space, we analyzed the statistical properties of token embeddings across the Qwen2 and Qwen2.5 model families. We systematically compared the pre-trained pure language models (Base) against their visual-language counterparts (VL).

\subsection{Global Statistics}
Table~\ref{tab:global_stats} presents the global statistics (Mean and Standard Deviation) of the embedding norms. We observe that while Qwen2-VL maintains a mean norm similar to its base model, Qwen2.5-VL exhibits a noticeable systemic inflation (Mean: $0.80 \rightarrow 0.90$), indicating a broader shift in the embedding distribution during multimodal alignment.

\begin{table}[h]
    \centering
    \small
    \caption{Comparison of global embedding statistics (L$_2$ Norm Mean and Standard Deviation) between Base (Pure Text) and VL (Multimodal) models.}
    \label{tab:global_stats}
    \begin{tabular}{l cc | cc}
    \toprule
    & \multicolumn{2}{c|}{\textbf{Qwen2 Family} (7B)} & \multicolumn{2}{c}{\textbf{Qwen2.5 Family} (7B)} \\
    \textbf{Metric} & \textbf{Base} & \textbf{VL} & \textbf{Base} & \textbf{VL} \\
    \midrule
    Mean Norm ($\mu$) & 0.6908 & 0.6820 & 0.8031 & 0.8965 \\
    Std. Deviation ($\sigma$) & 0.1610 & 0.1588 & 0.1801 & 0.1821 \\
    \bottomrule
    \end{tabular}
\end{table}

\subsection{Detailed Analysis of Top-Norm Tokens}
Tables~\ref{tab:qwen2_top10} and \ref{tab:qwen25_top10} detail the Top-10 tokens with the highest L$_2$ norms. The comparison reveals a fundamental structural change in the embedding space:

\begin{enumerate}
    \item \textbf{Base Models (Pure Text):} In pure language models, the tokens with the highest norms are typically rare subwords or specific syntactic markers (e.g., \texttt{'\'a{}veis'}, \texttt{'"=>'}). The maximum norm is relatively contained ($\approx 1.0 - 1.1$).
    \item \textbf{VL Models (Multimodal):} Upon multimodal training, the visual boundary tokens (\texttt{<|vision\_start|>}, \texttt{<|vision\_end|>}) emerge as extreme outliers. In Qwen2-VL, they reach norms of $\approx 2.4$, far exceeding the previous maximums. This explicitly confirms that the model allocates significantly larger magnitudes to visual anchors to accommodate the high-norm visual features.
\end{enumerate}

\begin{table}[h]
    \centering
    \small
    \caption{\textbf{Qwen2 Family Comparison:} Top-10 tokens by L$_2$ norm. Note how the visual tokens (in VL) far exceed the magnitude of the outliers in the Base model.}
    \label{tab:qwen2_top10}
    \setlength{\tabcolsep}{3pt}
    \begin{tabular}{c | lr | lr}
    \toprule
    & \multicolumn{2}{c|}{\textbf{Qwen2-7B-Instruct (Base)}} & \multicolumn{2}{c}{\textbf{Qwen2-VL-7B-Instruct (VL)}} \\
    \textbf{Rank} & \textbf{Token} & \textbf{L$_2$ Norm} & \textbf{Token} & \textbf{L$_2$ Norm} \\
    \midrule
    \textbf{1} & \texttt{'\'a{}veis'} & 1.0273 & \textbf{\texttt{<|vision\_start|>}} & \textbf{2.4590} \\
    \textbf{2} & \texttt{'"=>'} & 0.9741 & \textbf{\texttt{<|vision\_end|>}} & \textbf{2.3320} \\
    \textbf{3} & \texttt{>();\textbackslash\textbackslash n\textbackslash\textbackslash n} & 0.9258 & \texttt{'\'a{}veis'} & 1.0049 \\
    4 & \texttt{'s'} & 0.9072 & \texttt{'"=>'} & 0.9458 \\
    5 & \texttt{'is'} & 0.9004 & \texttt{'s'} & 0.9067 \\
    6 & \texttt{'an'} & 0.8999 & \texttt{>();\textbackslash\textbackslash n\textbackslash\textbackslash n} & 0.9009 \\
    7 & \texttt{'le'} & 0.8994 & \texttt{' .'} & 0.8999 \\
    8 & \texttt{'ed'} & 0.8979 & \texttt{'is'} & 0.8989 \\
    9 & \texttt{' .'} & 0.8965 & \texttt{'an'} & 0.8975 \\
    10 & \texttt{'ar'} & 0.8965 & \texttt{'le'} & 0.8965 \\
    \bottomrule
    \end{tabular}
\end{table}

\begin{table}[h]
    \centering
    \small
    \caption{\textbf{Qwen2.5 Family Comparison:} Top-10 tokens by L$_2$ norm. In the Base model, visual tokens are initialized to zero or unused. In the VL model, they become the largest vectors.}
    \label{tab:qwen25_top10}
    \setlength{\tabcolsep}{3pt}
    \begin{tabular}{c | lr | lr}
    \toprule
    & \multicolumn{2}{c|}{\textbf{Qwen2.5-7B-Instruct (Base)}} & \multicolumn{2}{c}{\textbf{Qwen2.5-VL-7B-Instruct (VL)}} \\
    \textbf{Rank} & \textbf{Token} & \textbf{L$_2$ Norm} & \textbf{Token} & \textbf{L$_2$ Norm} \\
    \midrule
    \textbf{1} & \texttt{'\'a{}veis'} & 1.1201 & \textbf{\texttt{<|vision\_end|>}} & \textbf{2.1523} \\
    \textbf{2} & \texttt{','} & 1.1025 & \textbf{\texttt{<|vision\_start|>}} & \textbf{1.4004} \\
    \textbf{3} & \texttt{'is'} & 1.0322 & \texttt{','} & 1.2764 \\
    4 & \texttt{'s'} & 1.0303 & \texttt{' unb'} & 1.2158 \\
    5 & \texttt{'an'} & 1.0293 & \texttt{'\'a{}veis'} & 1.1748 \\
    6 & \texttt{'en'} & 1.0283 & \texttt{':\textbackslash\textbackslash n\textbackslash\textbackslash n'} & 1.1670 \\
    7 & \texttt{'(Chinese Char)'} & 1.0264 & \texttt{'en'} & 1.1670 \\
    8 & \texttt{' out'} & 1.0264 & \texttt{'s'} & 1.1650 \\
    9 & \texttt{'on'} & 1.0254 & \texttt{'(Chinese Char)'} & 1.1650 \\
    10 & \texttt{'le'} & 1.0254 & \texttt{'(Chinese Char)'} & 1.1650 \\
    \bottomrule
    \end{tabular}
\end{table}

\section{Detailed Implementation of Norm Alignment}
\label{app:implementation_details}

Our norm alignment layer, denoted as \texttt{GlobalWeightCompensatedLayerNorm}, is designed to enforce norm compression in the forward pass while preserving gradient magnitude in the backward pass. Unlike standard LayerNorm, we introduce a gradient compensation mechanism to handle the extremely small initialization of the gain parameter.

\paragraph{Forward Pass (Standard LayerNorm).}
Given the input vector $\mathbf{x} \in \mathbb{R}^D$ (representing the projected visual tokens), we first compute the mean $\mu$ and variance $\sigma^2$ across the feature dimension:
\begin{equation}
    \mu = \frac{1}{D} \sum_{i=1}^D x_i, \quad \sigma^2 = \frac{1}{D} \sum_{i=1}^D (x_i - \mu)^2.
\end{equation}
The input is then normalized and transformed by the affine parameters, the learnable gain $\mathbf{g}$ and bias $\bm{\beta}$:
\begin{equation}
    \hat{\mathbf{x}} = \frac{\mathbf{x} - \mu}{\sqrt{\sigma^2 + \epsilon}}, \quad \mathbf{y} = \hat{\mathbf{x}} \odot \mathbf{g} + \bm{\beta}.
\end{equation}
Crucially, consistent with our method description in Section~\ref{sec:method_alignment}, $\mathbf{g}$ is initialized to a small scalar value (derived from the target text norm $T$) to ensure immediate alignment, while $\bm{\beta}$ is initialized to zero.

\paragraph{Backward Gradient Compensation.}
Since $\mathbf{g}$ is initialized to a minute value, standard backpropagation would attenuate the gradients flowing back to the input $\hat{\mathbf{x}}$ (and consequently to the vision encoder) by a factor proportional to $\|\mathbf{g}\|$. To prevent this, we register a backward hook on the normalized tensor $\hat{\mathbf{x}}$ to dynamically rescale the gradients.

The compensation process proceeds as follows during training:

\begin{enumerate}
    \item \textbf{Global Gain Aggregation:} We compute the mean absolute value of the gain parameter vector $\mathbf{g}$ to obtain a global scaling scalar:
    \begin{equation}
        \mu_{g} = \frac{1}{D} \sum_{i=1}^{D} |g_i|.
    \end{equation}
    
    \item \textbf{Safety Clamping:} To ensure numerical stability and prevent division by zero (in the rare event of parameter collapse), we clamp the scalar with a minimal threshold $\delta = 10^{-3}$:
    \begin{equation}
        \mu_{\text{safe}} = \max(\mu_{g}, \delta).
    \end{equation}
    
    \item \textbf{Compensation Factor Application:} The gradient $\nabla_{\hat{\mathbf{x}}}\mathcal{L}$ flowing backwards through the normalization operation is scaled by the inverse of this scalar:
    \begin{equation}
        \nabla_{\hat{\mathbf{x}}}\mathcal{L}_{\text{scaled}} = \nabla_{\hat{\mathbf{x}}}\mathcal{L} \times \frac{1}{\mu_{\text{safe}}}.
    \end{equation}
\end{enumerate}

This mechanism effectively decouples the forward scale (controlled by $\mathbf{g}$) from the backward gradient scale (restored to $\approx 1.0$). By preserving the gradient magnitude, we ensure that the vision encoder receives effective supervision signals from the very first training iteration, despite the severe norm compression applied at the interface.

\section{Training Details}
\label{app:training}

\subsection{Experimental Setup}
\label{sec:exp_setup}

Our experiments are conducted within the LLaVA-1.5 architectural framework. To ensure a comprehensive evaluation, we employ two distinct base language models: Llama-3.2-3B-Instruct and Qwen2.5-7B-Instruct. Both models are coupled with SigLIP-SO400M-Patch14-384 as the vision encoder. We follow a unified two-stage training protocol for both backbones: the first stage consists of one epoch of feature alignment pre-training on the LLaVA-558K dataset, using a learning rate of 1e-3, a per-device batch size of 2, and 2 gradient accumulation steps, resulting in a global batch size of 256. This is followed by one epoch of full-model instruction tuning on the LLaVA-NeXT dataset, for which the learning rate is decreased to 1e-5 for the language model and 2e-6 for the vision encoder, with a per-device batch size of 1 and 4 gradient accumulation steps, corresponding to a global batch size of 128. Across both stages, we utilize a cosine learning rate scheduler with a warmup ratio of 0.03 and set weight decay to 0. Notably, we do not employ dynamic high-resolution strategies; all images are uniformly resized to $384 \times 384$. To ensure reproducibility, we set the random seed to 42 for all experiments.

To comprehensively evaluate the model's performance, we assessed its capabilities on both multimodal and text-only tasks. The model's multimodal abilities were benchmarked against a comprehensive suite of benchmarks, including MMBench-EN~\citep{liu2024mmbench}, MM-Star~\citep{chen2024we}, OCRBench~\citep{liu2024ocrbench}, SEED-Bench-2-Plus~\citep{li2024seed}, ScienceQA~\citep{lu2022learn}, AI2D~\citep{kembhavi2016diagram}, and POPE~\citep{li2023evaluating}. Furthermore, to gauge its core language understanding and commonsense reasoning skills, we evaluated its performance on the HellaSwag~\citep{zellers2019hellaswag} and MMLU~\citep{hendrycks2020measuring} benchmarks.

\section{Training Dynamics}
\label{app:training dynamics}

To further probe the temporal evolution of these dynamics, we tracked the layer-wise cosine similarity of image and text tokens across varying checkpoints during the multimodal pre-training phase (Figure~\ref{fig:sub_top}) and \ref{fig:sub_bottom}). As shown in Figure~\ref{fig:sub_top}, without the additional normalization layer, visual tokens exhibit persistently high inter-layer similarity. This pattern remains virtually static throughout the training process, confirming that the "representational inertia" induced by norm discrepancy is a persistent barrier, effectively locking visual features against semantic transformation from the very beginning. In contrast, our method (Figure~\ref{fig:sub_bottom}) initiates with a significantly lower visual similarity, indicating active feature updates. Interestingly, as training progresses, we observe a gradual upward trend in similarity, which correlates with the optimization of the added LayerNorm's gain parameter ($\mathbf{g}$). Crucially, as observed in the figure, this upward trend has converged. This phenomenon implies that perfectly consistent update rates between image and text tokens might not be optimal, suggesting that a certain degree of divergence between the two could be appropriate. However, this is fundamentally distinct from the significant update inconsistency driven by the massive initial norm discrepancy observed in the baseline.

\begin{figure}[htbp] 
    \centering

    \begin{subfigure}[b]{1.0\linewidth}
        \centering
        \includegraphics[width=0.95\linewidth]{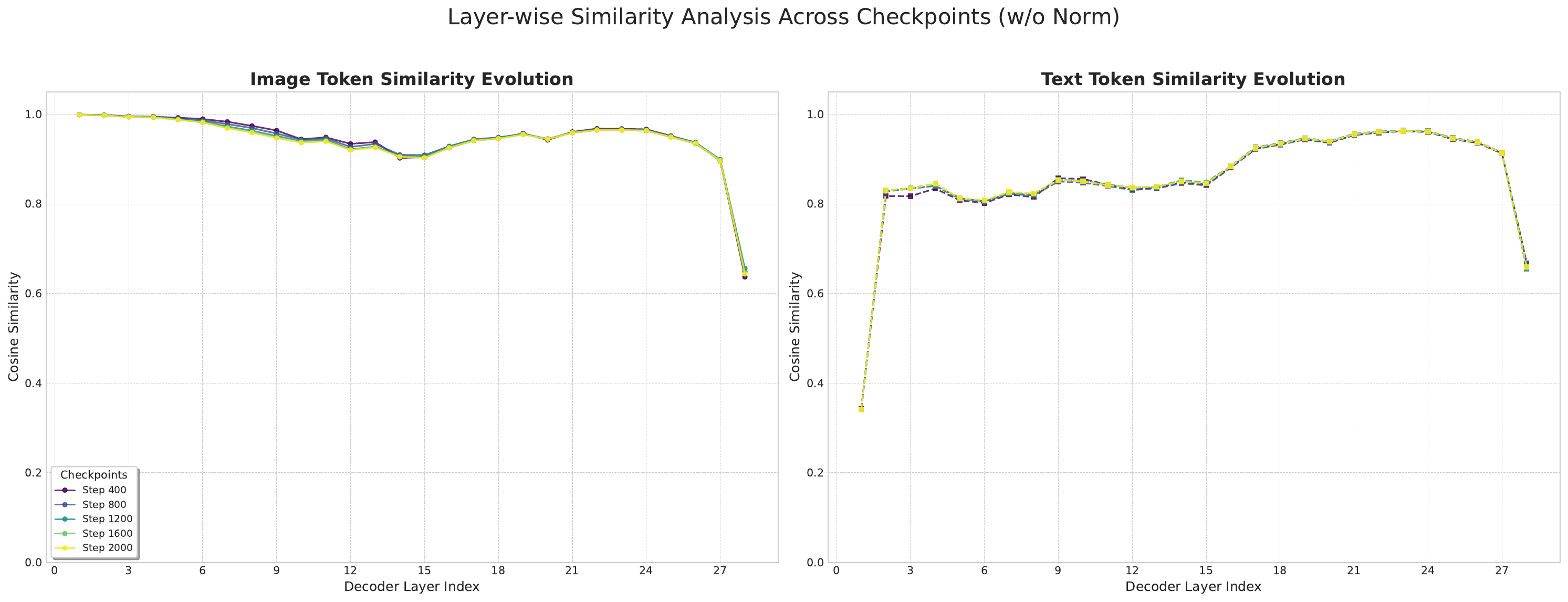}
        \caption{Evolution of layer-wise cosine similarity across different checkpoints during the multimodal pre-training phase (w/o Norm).}
        \label{fig:sub_top}
    \end{subfigure}

    \vspace{1em}

    \begin{subfigure}[b]{1.0\linewidth}
        \centering
        \includegraphics[width=0.95\linewidth]{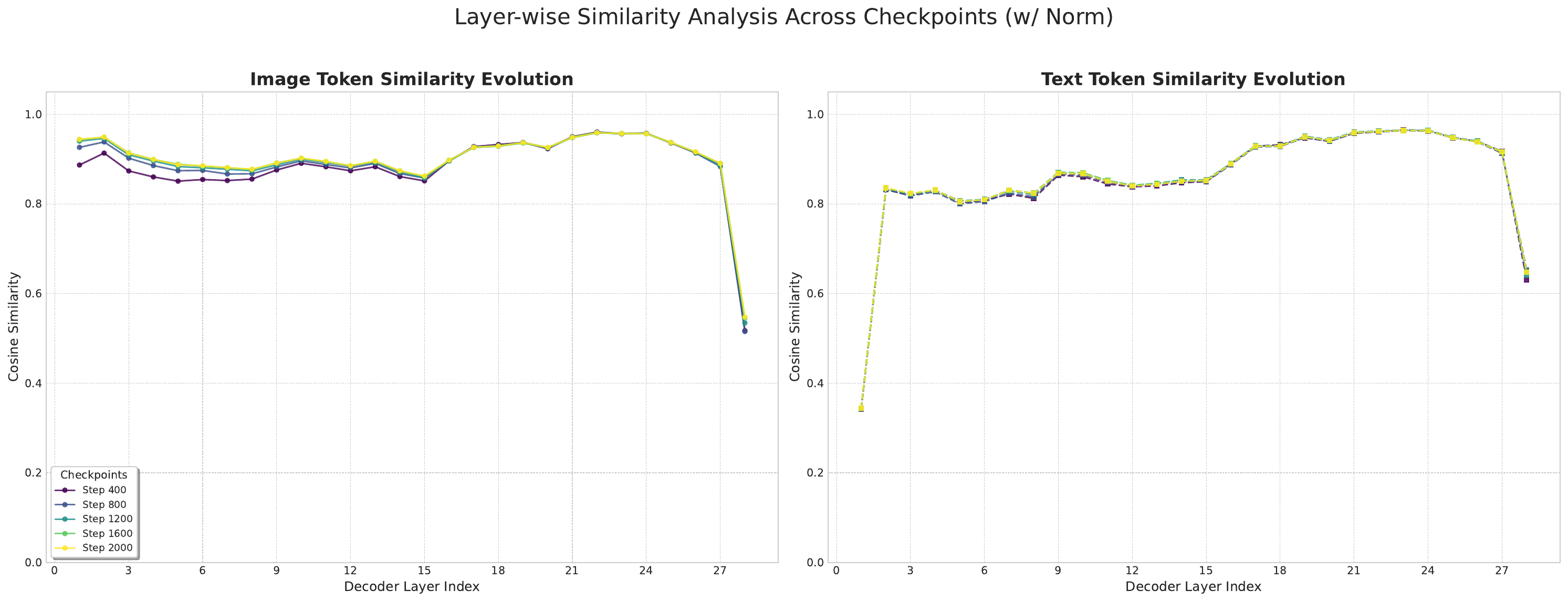}
        \caption{}
        \label{fig:sub_bottom}
    \end{subfigure}

    \caption{ Evolution of layer-wise cosine similarity across different checkpoints during the multimodal pre-training phase (w Norm).}
    \label{fig:overall_placeholder}
\end{figure}
\section{Appendix:  Attention Visualization}

In each pair of heatmaps, the bottom image shows the model with  norm applied, while the top image shows the baseline model. The caption for each pair corresponds to the text query used.

\label{app:attn}
\begin{figure}[p] 
    \centering

    \begin{subfigure}{\textwidth}
        \centering
        \includegraphics[width=0.8\textwidth]{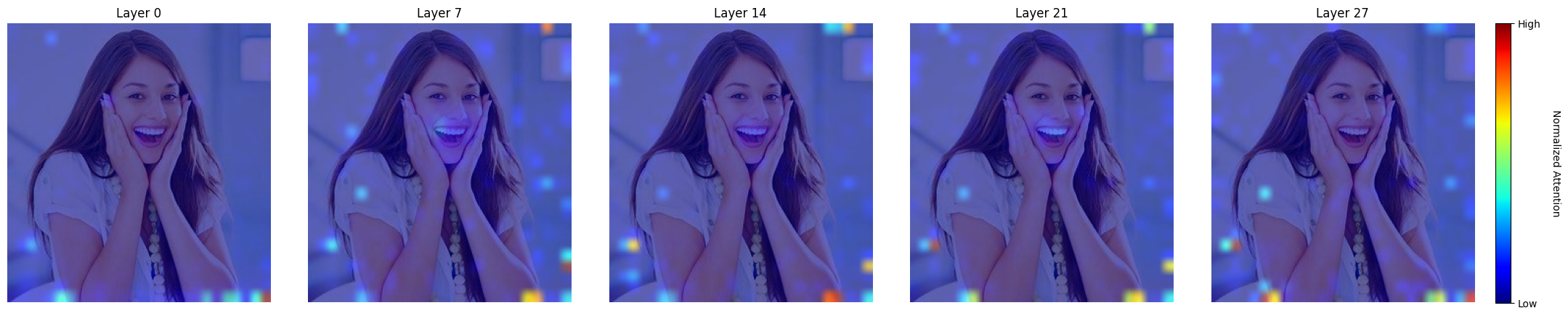}
        \includegraphics[width=0.8\textwidth]{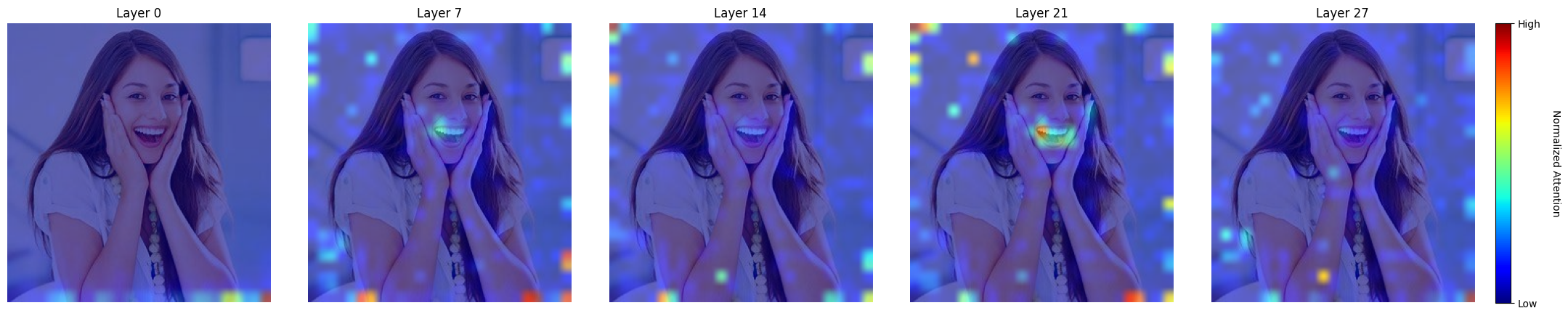}
        \caption{Which mood does this image convey?}

    \end{subfigure}
    
    \vspace{1cm} 

    \begin{subfigure}{\textwidth}
        \centering
        \includegraphics[width=0.8\textwidth]{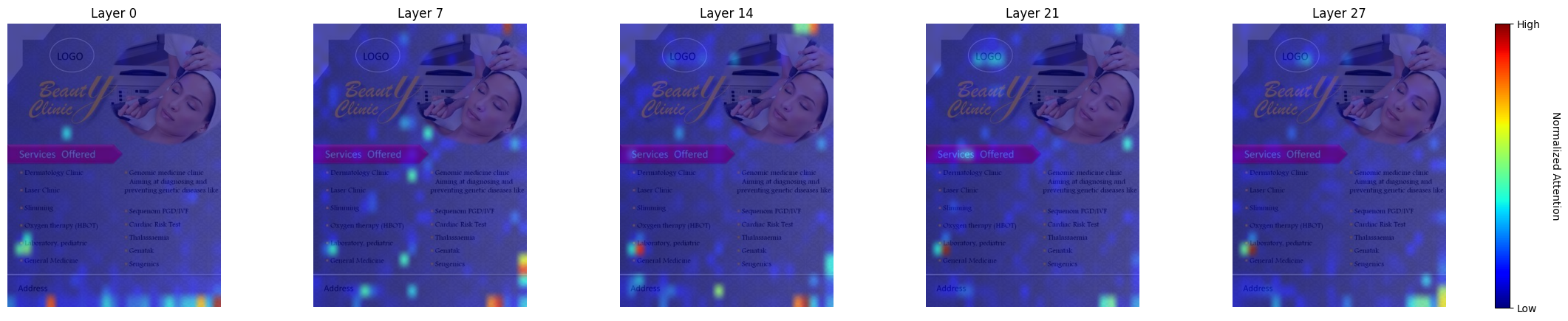}
        \includegraphics[width=0.8\textwidth]{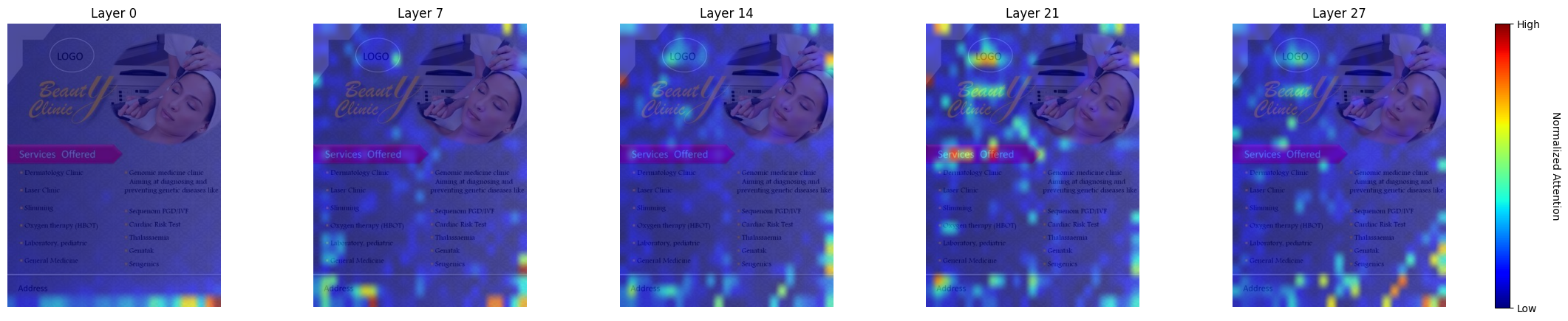}
        \caption{What is the main subject of the flyer seen in the image?}

    \end{subfigure}
    
    \vspace{1cm}

    \begin{subfigure}{\textwidth}
        \centering
        \includegraphics[width=0.8\textwidth]{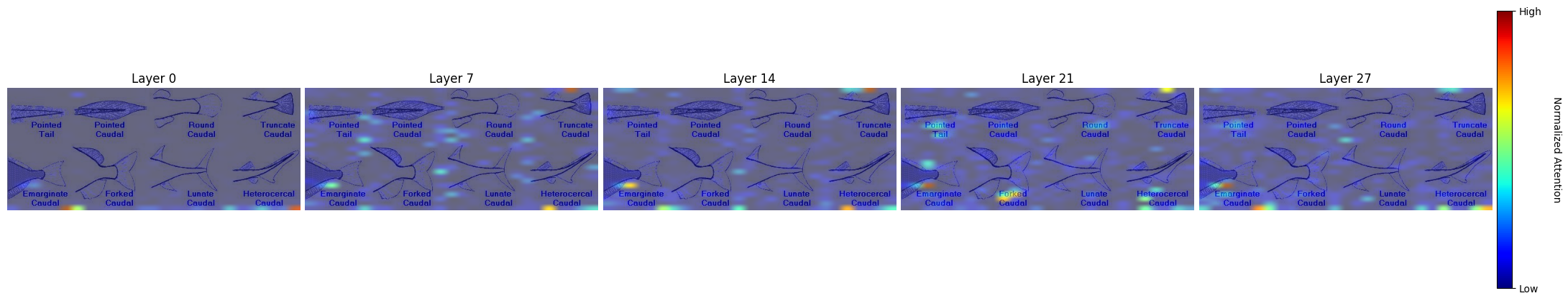}
        \includegraphics[width=0.8\textwidth]{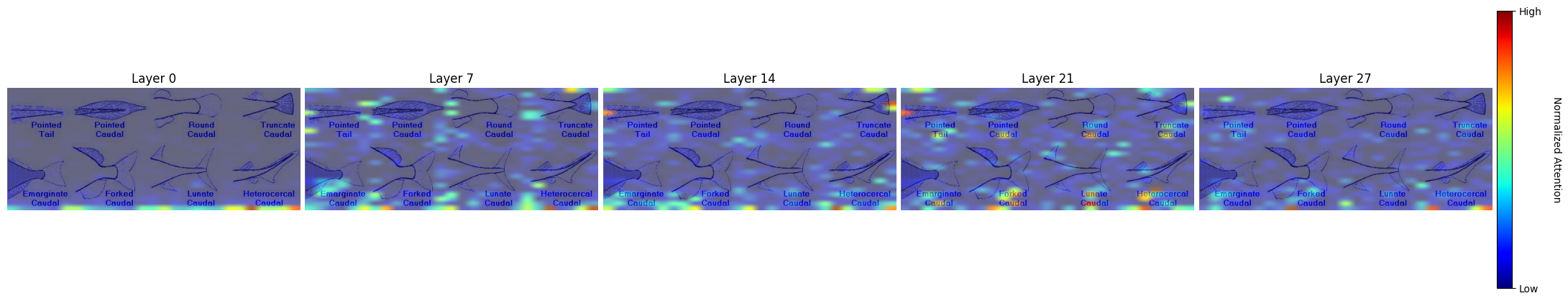}
        \caption{How many different \"pointed\" kinds are there?}

    \end{subfigure}

    \begin{subfigure}{\textwidth}
        \centering
        \includegraphics[width=0.8\textwidth]{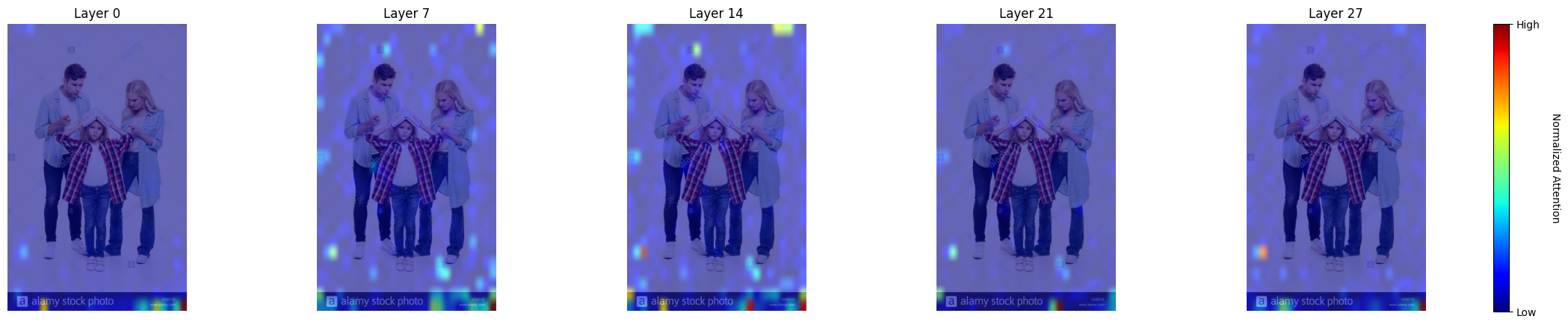}
        \includegraphics[width=0.8\textwidth]{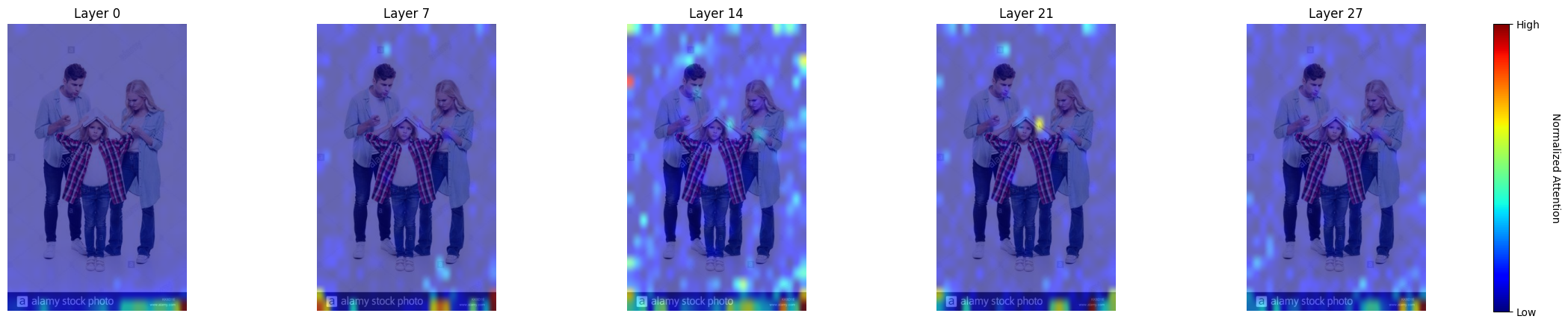}
        \caption{What type of family is shown in the image}

    \end{subfigure}

    \label{fig:appendix_attention_maps}
\end{figure}

\begin{figure}[p] 
    \centering

    \begin{subfigure}{\textwidth}
        \centering
        \includegraphics[width=0.8\textwidth]{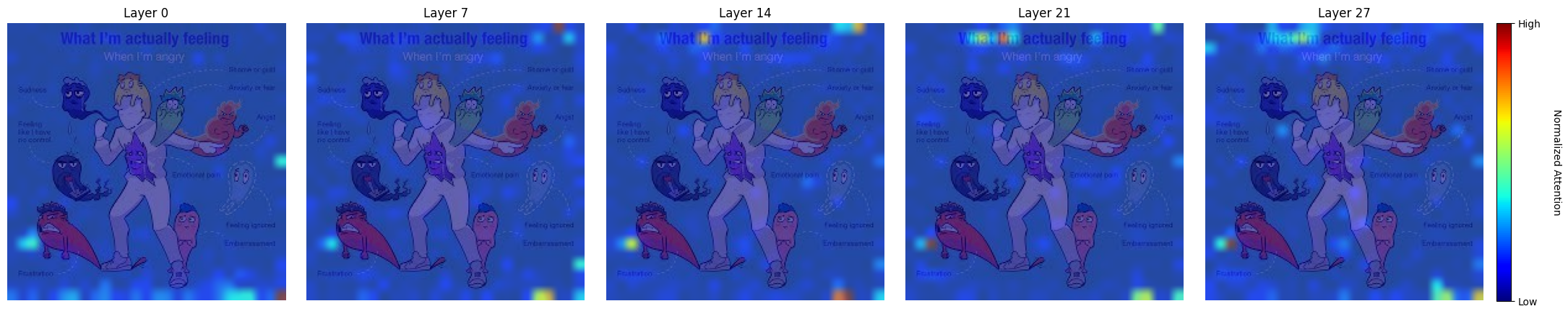}
        \includegraphics[width=0.8\textwidth]{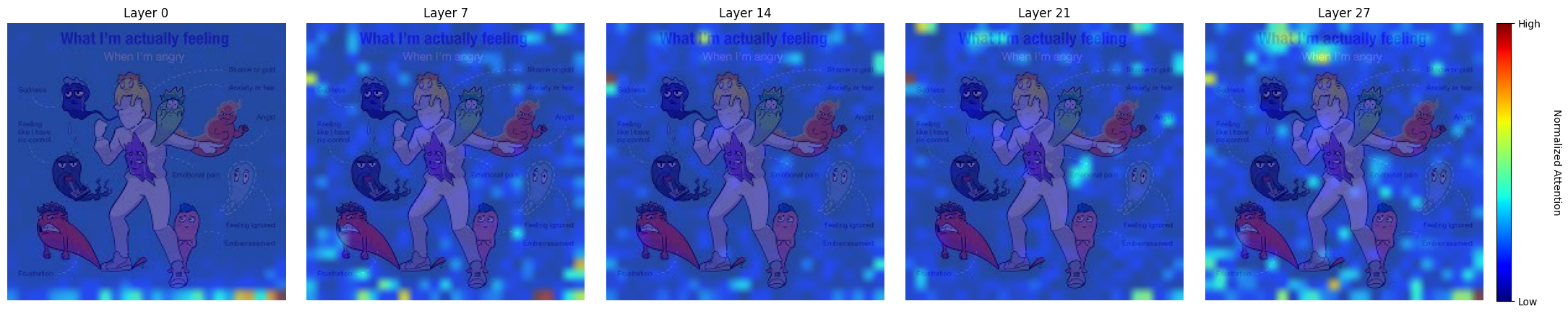}
        \caption{What emotion is portrayed in this image?}

    \end{subfigure}
    
    \vspace{1cm} 

    \begin{subfigure}{\textwidth}
        \centering
        \includegraphics[width=0.8\textwidth]{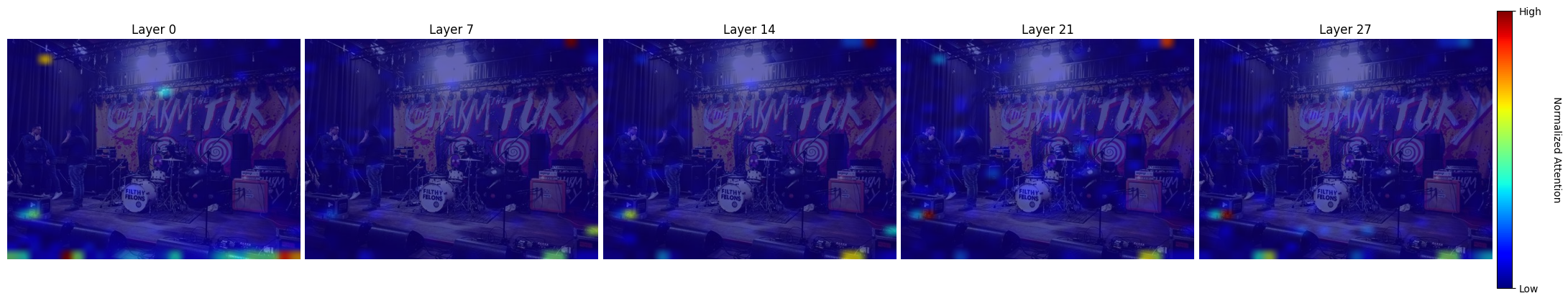}
        \includegraphics[width=0.8\textwidth]{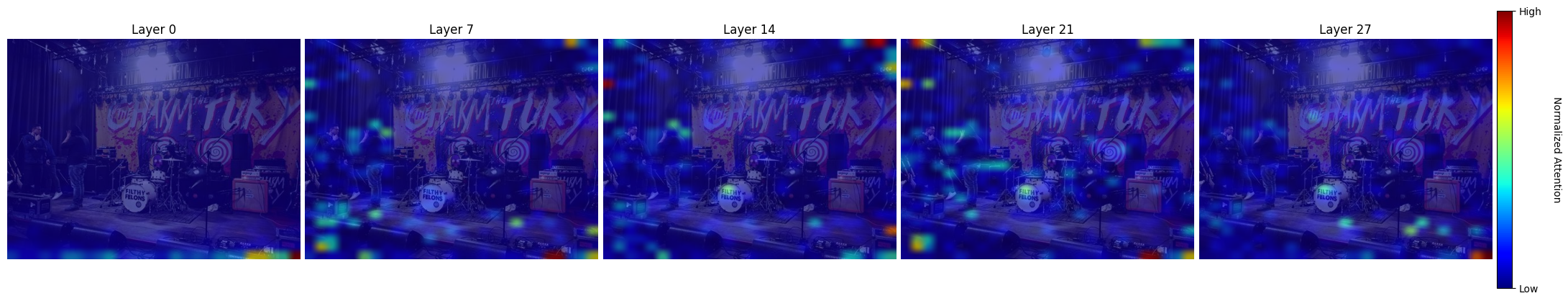}
        \caption{How many people are performing on the stage?}

    \end{subfigure}
    
    \vspace{1cm}

    \begin{subfigure}{\textwidth}
        \centering
        \includegraphics[width=0.8\textwidth]{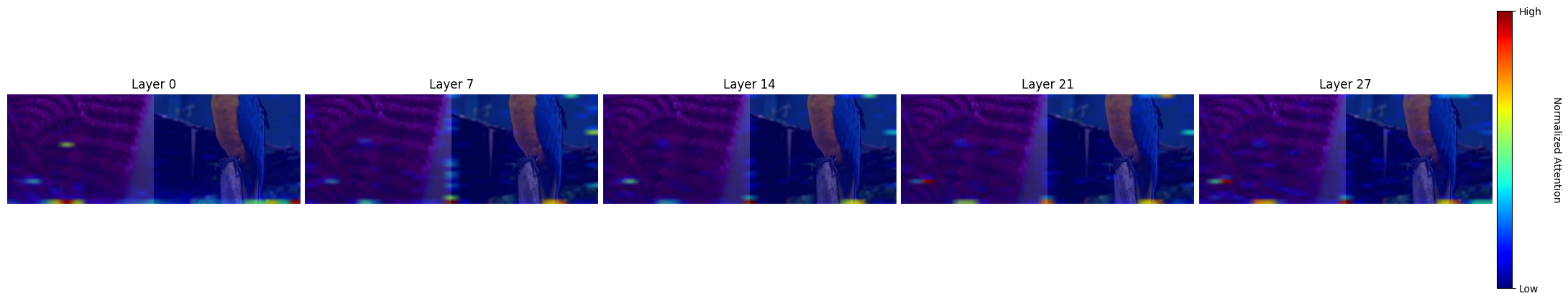}
        \includegraphics[width=0.8\textwidth]{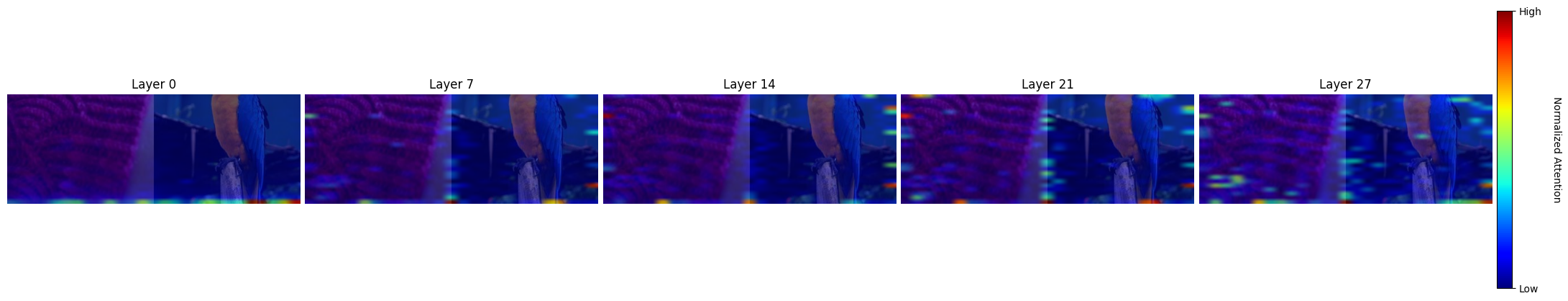}
        \caption{which image is more colorful?}
       
    \end{subfigure}

    \begin{subfigure}{\textwidth}
        \centering
        \includegraphics[width=0.8\textwidth]{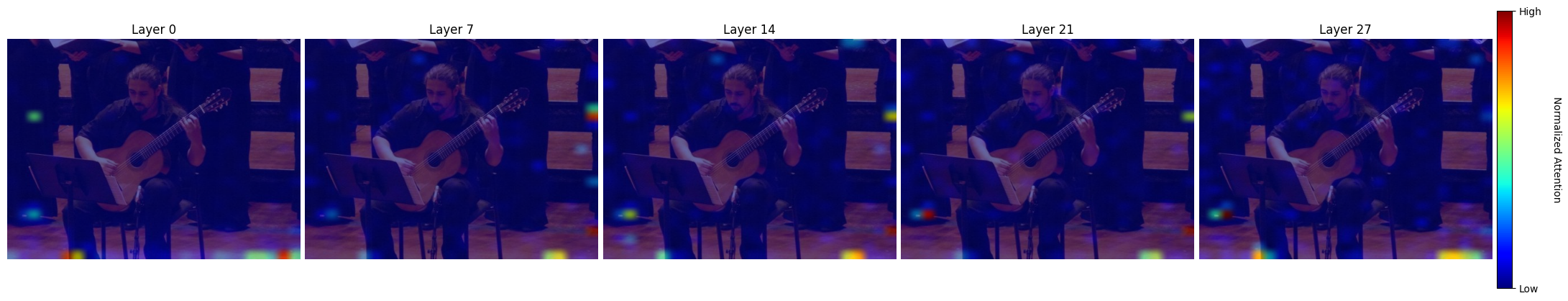}
        \includegraphics[width=0.8\textwidth]{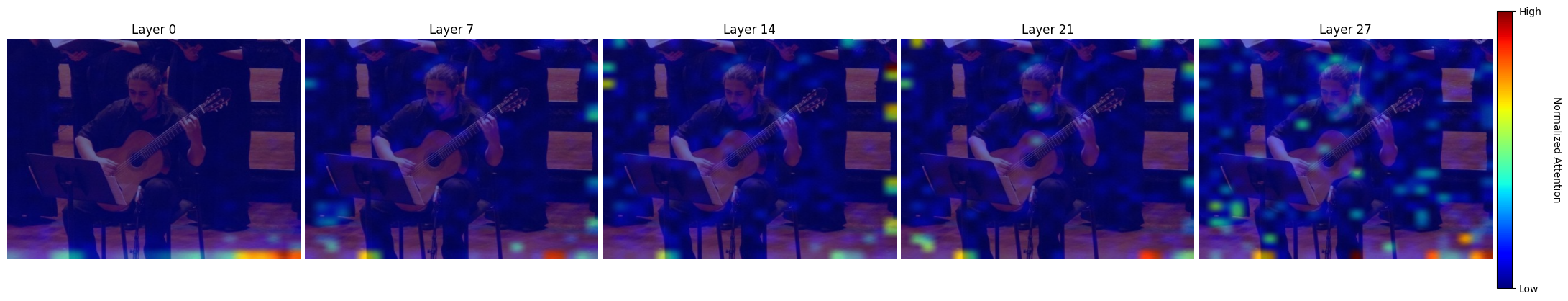}
        \caption{What is the main focus of the image?}
       
    \end{subfigure}

\end{figure}

\end{document}